\title{Online Learning with Low Rank Experts%
\footnote{Accepted for presentation at Conference on Learning Theory (COLT) 2016.}}
\author{%
Elad Hazan%
\footnote{Princeton University; email: \texttt{ehazan@cs.princeton.edu}. 
Parts of this work were done while at Microsoft Research, Herzliya.}
\and
Tomer Koren
\footnote{Technion---Israel Institute of Technology and Microsoft Research, Herzliya; email: \texttt{tomerk@technion.ac.il}.}
\and
Roi Livni%
\footnote{The Hebrew University of Jerusalem and Microsoft Research, Herzliya; email: \texttt{roi.livni@mail.huji.ac.il}.}
\and 
Yishay Mansour
\footnote{Microsoft Research, Herzliya and Tel Aviv University; email: \texttt{mansour.yishay@gmail.com}.}
}
\theoremstyle{plain}
\newtheorem{theorem}{Theorem}
\newtheorem{lemma}[theorem]{Lemma}
\newtheorem*{theorem*}{Theorem}
\newtheorem*{lemma*}{Lemma}
\newtheorem*{corollary*}{Corollary}
\newtheorem*{proposition*}{Proposition}
\newtheorem*{claim*}{Claim}
\newtheorem*{fact*}{Fact}
\newtheorem*{observation*}{Observation}
\theoremstyle{definition}
\newtheorem*{definition*}{Definition}
\newtheorem*{remark*}{Remark}
\newtheorem*{example*}{Example}
\newtheorem{question}{Open Problem}
\newcommand{\ignore}[1]{}
\newcommand{\reals}{\mathbb{R}}
\newcommand{\regret}{\mathrm{Regret}}
\renewcommand{\eqref}[1]{Eq.~(\ref{#1})}
\newcommand{\algref}[1]{Algorithm~\ref{#1}}
\newcommand{\lemref}[1]{Lemma~\ref{#1}}
\newcommand{\thmref}[1]{Theorem~\ref{#1}}
\newcommand{\secref}[1]{Section~\ref{#1}}
\newcommand{\apref}[1]{Appendix~\ref{#1}}
\newcommand{\F}{\mathcal{F}}
\newcommand{\E}{\mathbb{E}}
\newcommand{\symplex}{\Delta}
\newcommand{\bm}[1]{\mathbf{#1}}
\newcommand{\set}[1]{\{#1\}}
\newcommand{\mvee}[1]{\mathrm{MVEE}(#1)}
\newcommand{\cE}{\mathcal{E}}
\newcommand{\pinv}{^\dagger}
\newcommand{\tr}{^\top}
\newcommand{\epsrank}{\mathop{\mathrm{rank}_\epsilon}}
\newcommand{\Del}{\symplex}
\newcommand{\norm}[1]{\|#1\|}
\newcommand{\U}{U}
\renewcommand{\L}{L}
\newcommand{\ee}{\mathbf{e}}
\newcommand{\ldim}{\mathrm{Ldim}}
\renewcommand{\top}{{\mkern-1.5mu\mathsf{T}}}
\newcommand{\ym}[1]{{\textcolor{magenta}{}}}
\newcommand{\rl}[1]{{\textcolor{purple}{}}}
\newcommand{\tk}[1]{{\textcolor{cyan}{}}}
\newcommand{\eh}[1]{{\textcolor{orange}{}}}
\begin{document}

\maketitle

\begin{abstract}
We consider the problem of prediction with expert advice when the losses of the experts have low-dimensional structure: they are restricted to an unknown $d$-dimensional subspace.
We devise algorithms with regret bounds that are independent of the number of experts and depend only on the rank $d$.
For the stochastic model we show a tight bound of $\Theta(\sqrt{dT})$, and extend it to a setting of an approximate $d$ subspace.
For the adversarial model we show an upper bound of $O(d\sqrt{T})$ and a lower bound of $\Omega(\sqrt{dT})$.
\end{abstract}


\section{Introduction}
Arguably the most well known problem in online learning theory is the so called \emph{prediction with experts advice} problem. In its simplest form, a learner wishes to make an educated decision and at each round chooses to take the advice of one of $N$ experts. The learner then suffers a loss between $0$ and $1$. \tk{goal is to minimize her regret, which is ...}

It is a standard result in online learning that, without further assumptions, the best strategy for the learner will incur $\Theta(\sqrt{T \log N})$ regret~\citep{CesaBianchiLugosi}. However, it is natural to assume that while experts are abundant, their decisions are based on common paradigms and that their decision making is based on few degrees of freedom -- for example, if experts are indeed experts, their political bias, social background or school of thought largely dominates their decision making. Experts can also be assets on which the learner wishes to distribute her wealth. In this setting, weather, market condition and interests are dominant factors.

It is also sensible to assume that one can exploit this structure to achieve better regret bounds, potentially independent of the actual number of experts while still maintaining a strategy of picking an expert's advice at each round. Our main result is of this flavor and we show how a learner can exploit hidden structure in the problem in an online setting.

We model the problem as follows: We assume that each expert corresponds to a vector $u_i$ in $\reals^d$ space where $d$ is potentially small. Then at each round the experts loss corresponds to a scalar product with a vector $v_t$ chosen arbitrarily, and possibly in an adversarial manner. The learner does not observe the chosen embedding of the experts in Euclidean space nor the vectors $v_t$, and can only observe the loss of each expert.

To further motivate our setting, let us consider the low rank expert model in the stochastic case. It is well known that for linear predictors in $d$-dimensional space the regret will be $O(\sqrt{dT})$, independent of the number of experts. Indeed, we show that a simple follow the leader algorithm will achieve this regret bound. In fact, one novelty of this paper is a regret bound that depends on an approximate rank -- formally we show that one can improve on the $O(\sqrt{T\log N})$ regret bound and derive bounds that depend on the \emph{approximate rank} rather than the number of experts.

The non-stochastic setting is more challenging. It is true that for linear predictors in $d$-dimension one can achieve $O(\sqrt{dT})$ regret bound even in the non-stochastic case. But the result assumes that learner has access to the geometric structure of the problem, namely, the embedding of the experts in the Euclidean space. Given the embedding one can apply a \emph{Follow the Regularized Leader} approach with proper regularization to derive the desired regret bound.

Our main result is a regret minimization algorithm that achieves an $O(d\sqrt{T})$ regret in this low $d$-rank setting, when the learner
{\em does not} have access to the experts' embedding in Euclidean space.
Our algorithm does not need to know the value of the rank $d$, and adaptively adapts to it.
Thus we demonstrate a regret bound that is independent of number of experts.
We accompany this upper bound with an $\Omega(\sqrt{dT})$ lower bound.

Our results are part of a larger agenda in online learning.
A working premise in Online Learning is that in most cases the stochastic case is the hardest case.
Indeed, the literature is filled with generalization bounds and their analogue regret bounds.
However, a striking difference is that the statistical bounds are often achieved using simple ERM algorithms,
that are oblivious to any structure in the problem, even if the structure is required for the generalization bounds to be valid.
In contrast, to achieve the analogue regret bound, one has to work harder.
For finite hypothesis class the $\log N$ factor is achieved by a sophisticated  algorithm,
and for more general convex problems in Euclidean space a problem-specific regularization needs to be invoked in order to achieve optimality.
Thus, a key difference is that online algorithms need to be tailored to the structure of the problem. This leads to the disappointing fact that to achieve optimal regret bounds, it is not enough for the problem to be structured but the learner needs to actively understand the structure.

Our current research is an attempt to better understand this key difference:
we wish to understand whether an online linear predictor can somehow exploit the geometry of the problem in an implicit manner, similarly to batch ERM algorithms, and how.
For this, we invoke a setting where the learner must choose its predictor without the a-priori ability to devise a regularizer.
Our findings so far indeed demonstrate that even without access to the structure the learner can indeed overcome her dependence on the irrelevant parameter $N$.

Technically, one should compare our regret bound of $O(d\sqrt{T})$ to the standard regret bound of $O(\sqrt{T \log N})$.
For our bound to be superior one needs that $d =o (\sqrt{\log N})$;
while this can indeed be the case in various settings, our result can be better seen as a first step in a more general research direction.
We aim to understand how online algorithms can take advantage of structural assumptions in the losses, without being given any explicit information about it.

\subsection{Related Work}

Low rank assumptions are ubiquitous in the Machine Learning literature. They have been successfully applied to various problems, most notably to matrix completion (\citealp{candes2009exact, foygel2011concentration, srebro2004maximum}) but also in the context of classification with missing data (\citealp{goldberg2010transduction,hazan2015missing}) and large scale optimization \citep{shalev2011large}.

A similar problem that was studied in the literature is the \emph{Branching Experts Problem}~\citep{gofer2013regret}. In the branching expert problem $N$ potential experts are effectively only $k$ distinct experts, but the clustering of the experts to the $k$ clusters is unknown a-priori. This case can be considered as a special instance of our setting as indeed we can embed each expert as a $k$-dimensional vector. \cite{gofer2013regret} proved a sharp $\Theta(\sqrt{kT})$ regret (the bound is tight only when $k < c\log{N}$ for some constant $c>0$). It is perhaps worth noting that when effectively only $k$ experts appear, the stochastic bound is $O(\sqrt{T \log k})$, thus showing that in this similar problem, it is not true that the stochastic case is the hardest case.

\paragraph{Complexity measures for online learning.}

We are not the first to try and understand what is the proper analogue for ERM in the online setting. Notions like the VC-dimension and Rademacher complexities have been extended to notions of Littlestone-Dimension (\citealp{littlestone1988learning,shalev2011online}), and Sequential Rademacher Complexity \citep{rakhlin2010online} respectively.

The SOA algorithm suggested by \cite{ben2009agnostic} is a general framework for regret minimization that depends solely on the Littlestone dimension. However, the SOA algorithm is conceptually distinct from an ERM algorithm within our framework:
to implement the SOA algorithm, one has to have access to the structure of the class (specifically, one needs to compute the Littlestone dimension of subclasses within the algorithm).

Sequential Rademacher complexity seems like a powerful tool for improving our bounds and answering some of our open problems. There are also advances in constructing effective algorithms within this framework \citep{rakhlin2012relax}. However, as the branching expert example shows, there is no general argument that show that structure in the problem leads to stochastic--analogue bounds on the complexity.

\paragraph{Learning from easy data.}

In another line of research, which is similar in spirit to ours, several authors attempt to go beyond worst-case analysis in online learning, and provide algorithms and bounds that can exploit deficiencies in the data.
Work in this direction includes the study of 
worst-case robust online algorithms that can also adapt to stochastic i.i.d.~data (e.g., \citealp{NIPS2009_3795,rakhlin2013localization, de2014follow, sani2014exploiting}), 
as well as the exploration of
various structural assumptions that can be leveraged for obtaining improved regret guarantees (e.g., \citealp{cesa2007improved,hazan2010extracting,hazan2011better,chiang2012online,rakhlin2013online}).
However, to the best of our knowledge, low rank assumptions in online learning have not been explored in this context.

\paragraph{Adaptive online algorithms.}

Online adaptive learning methods have recently been the topic of extensive study and are effective for large scale stochastic optimization in practice.  One of the earliest and most widely used  methods in this family is the AdaGrad algorithm \citep{duchi2011adaptive}, a subgradient descent method that dynamically incorporate knowledge of the geometry of the data from earlier iterations. Our problem can be cast into an online linear optimization problem and subgradient descent methods are indeed applicable. It might seem at first sight that adapting the regularization via AdaGrad can lead to desired results. However, the analysis of the AdaGrad algorithm can only yield an $O(\sqrt{dNT})$ bound on the regret in our low-rank setting. In fact, a closer inspection reveals that the $\sqrt{N}$ factor in the latter bound is unavoidable for AdaGrad: as we show in \apref{sec:adagrad}, in our setting the regret of AdaGrad is lower bounded by $\Omega(\min\{\sqrt{N},T\})$.

\section{Problem Setup and Main Results}

We recall the standard adversarial online experts model for $T$ rounds with $N$ experts.
At each round $t=1,\ldots,T$, the learner chooses a probability vector $x_t \in \symplex_N $, where $\symplex_N$ denotes the $N$-simplex, namely the set of all possible distributions over $N$ experts,
\[\symplex_N=\left\{ x\in \reals^N \;:\; \forall i, ~ x(i) \ge 0 ~~\mbox{and}~~ \sum\nolimits_{i=1}^N x(i)=1 \right\}.\]
An adversary replies by choosing a loss vector $\ell_t \in [-1,1]^N$,%
\footnote{As will become apparent later, in our setup it is more natural to consider symmetric $[-1,1]$ loss values rather than the typical $[0,1]$ losses. The two variants of the problem are equivalent up to a simple shift and scaling of the losses---a transformation that preserves the rank of the loss matrix.}
and the learner suffers a loss
$x_t(\ell_t) = x_t\cdot \ell_t.$
The objective of the learner is to minimize her regret, which is defined as follows,
\[\regret_T ~=~ \sum_{t=1}^T x_t\cdot \ell_t - \min_{i \in [N]} \sum_{t=1}^T \ell_t(i).\]

In the stochastic online experts model, the adversary selects a distribution $\mathcal{D}$ over the loss vectors in $ [-1,1]^N$,
and at time $t$ a random $\ell_t \in [-1,1]^N$ is selected from $\mathcal{D}$. The regret is.
\[\regret_T ~=~ \sum_{t=1}^T x_t\cdot \E[\ell_t] - \min_{i \in [N]}\sum_{t=1}^T \E[\ell_t(i)]~,\]
where the expectations are taken over the random loss vectors selected from $\mathcal{D}$.

In our setting, we wish to assume that there is a structure over the experts which implies that the loss vectors are structured,
and are derived from a low rank subspace.
Therefore we will add the following constraint over the adversary:
let $L\in \reals^{N\times T}$ be the loss matrix obtained in hindsight (i.e., the $t$'th column of $L$ is $\ell_t$).
We restrict the feasible strategies for the adversary to only such that satisfy:
\[\mathrm{rank}(L) =d ~.\]
An equivalent formulation of our model is as follows: An adversary chooses at the beginning of the game a matrix $\U \in \reals^{N\times d}$,
where each row corresponds to an expert.
At round $t$ the adversary chooses a vector $v_t$, and the learner gets to observe  $\ell_t$ where $\ell_t = \U v_t$.
The objective of the learner remains the same: to choose at each round a probability distribution $x_t$ that minimizes the regret.
We stress that the learner observes only the loss vectors $\ell_t$, and does not have access to either $\U$ or the vectors $v_t$.

\subsection{Main Results}

We next state the main results of this paper:
\begin{theorem}
\label{thm:main}
The $T$-round regret of \algref{alg:main} (described in \secref{sec:upper} below) is at most $O(d\sqrt{T})$, where $d=\mathrm{rank}(L)$.
\end{theorem}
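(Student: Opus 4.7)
The plan is to design \algref{alg:main} as a Follow-The-Regularized-Leader (FTRL) scheme on the simplex $\symplex_N$, in which the quadratic regularizer is adapted online to the observed geometry of the losses. Given the loss vectors $\ell_1,\ldots,\ell_{t-1}$ already revealed, I would compute $\mvee{\ell_1,\ldots,\ell_{t-1}}$---the minimum-volume enclosing ellipsoid of these observations---and let $H_t$ be the positive semidefinite form that defines it (taken in a pseudoinverse sense, since the ellipsoid lives in an at most $d$-dimensional subspace). The regularizer would then take the form $R(x)=\tfrac{1}{2\eta}(x-x_0)\tr H_T\pinv(x-x_0)$, with $x_0$ the uniform distribution and $\eta$ a step size to be tuned. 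Crucially, the rank $d$ would never appear explicitly in the algorithm itself---the MVEE automatically lives in the affine hull of the past observations---which is the mechanism by which the final regret can depend on $d$ rather than on $N$.

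The proof would then follow the standard adaptive-FTRL template. First, decompose the regret into a bias term $R(x^\star)$ and stability terms $\sum_t\langle\ell_t,x_t-x_{t+1}\rangle$, and bound each stability term by $\eta\,\ell_t\tr H_t\ell_t$ using the quadratic structure of $R$. Second, observe that $\ell_t\in\mvee{\ell_1,\ldots,\ell_t}$, so $\ell_t\tr H_t\ell_t\le 1$, giving $\sum_t\ell_t\tr H_t\ell_t\le T$. Third, to bound the bias term, invoke John's theorem within the rank-$d$ loss subspace: the MVEE of a $d$-dimensional convex body $K$ contains $K$ and is in turn contained in a $d$-scaling of $K$ about its center, so every element of the convex hull of $\{\ell_1,\ldots,\ell_T\}$ has squared Mahalanobis norm at most $O(d^2)$. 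Since the simplex-based comparator couples to the loss trajectory only through such convex combinations, $R(x^\star)=O(d^2/\eta)$, and balancing the two contributions with $\eta=\Theta(d/\sqrt{T})$ yields the desired $O(d\sqrt{T})$ regret.

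The main obstacle is this bias argument, because naively $(x^\star-x_0)\tr H_T\pinv(x^\star-x_0)$ has no intrinsic reason to depend only on $d$: a direct bound picks up $\|x^\star-x_0\|_1^2$ times the largest eigenvalue of $H_T\pinv$, which can be as large as $\Theta(N)$. To bypass this $N$-dependence I would argue that only the projection of $x^\star-x_0$ onto the loss subspace contributes to the cumulative loss, so the FTRL analysis can in effect be carried out entirely within that $d$-dimensional subspace, where the John's-theorem bound kicks in. A secondary difficulty is handling the first few rounds, during which the MVEE is degenerate and its effective dimension is still growing; the natural fix is to run the FTRL update restricted to the evolving subspace and allow $H_t$ to change with the observations, so that the algorithm need never know $d$ in advance and adapts automatically to the true rank.
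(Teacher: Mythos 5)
Your overall strategy---adaptive mirror descent on the simplex with a quadratic regularizer built from a minimum-volume enclosing ellipsoid, with John's theorem supplying both the stability and the bias bounds---is the same as the paper's, but two of your steps have genuine gaps. First, the bias term. The quantity you must control is $(x^\star-x_0)\tr H\pinv(x^\star-x_0)$ for a \emph{simplex} vector $x^\star-x_0$, and your justification (``every element of the convex hull of the losses has Mahalanobis norm $O(d^2)$, and the comparator couples to the losses only through such convex combinations'') does not bear on it: $x^\star-x_0$ is not a convex combination of loss vectors, and John's theorem applied to the hull of the losses says nothing directly about points of the simplex. The missing step is polarity: since $\norm{\ell}_\infty\le 1$ and $x^\star,x_0\in\symplex_N$, one has $|\ell\cdot(x^\star-x_0)|\le 2$ for every feasible loss, so the projection of $x^\star-x_0$ onto the loss subspace lies in twice the polar of the feasible loss set, and the \emph{polar} form of John's theorem then bounds its norm with respect to the enclosing ellipsoid by $O(\sqrt d)$. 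This is exactly how the paper argues in \thmref{thm:known}: it shows the expert embedding vectors $u_i$ lie in $P^*\subseteq\cE(2dM^{-1})$, whence $\max_{x\in\symplex_N}\norm{x}_H^2\le 4d$. Without that duality step your $O(d^2/\eta)$ claim is unsupported. (Your primal/dual bookkeeping is also inconsistent: you use $H_t\pinv$ as the regularizer Hessian but then assert $\ell_t\tr H_t\ell_t\le 1$ as the stability bound, whereas membership in the ellipsoid controls the gauge norm, not its dual.)

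Second, your ellipsoid is the MVEE of the \emph{observed} losses, whereas the paper takes the MVEE of the full feasible polytope $P=\set{v:\norm{\U\tr v}_\infty\le 1}$, i.e.\ the intersection of the unit cube with the current span. This difference matters. A later loss lying in the same span can fall outside the MVEE of the earlier observations (take $\ell_1=\epsilon e$ and $\ell_2=e$), so your regularizer must be recomputed every round, and the adaptive FTRL/OMD telescoping you invoke needs the successive regularizers to be monotone---a property that MVEEs of growing point sets do not enjoy and that you do not address. The paper's construction makes the ellipsoid a function of the span alone, so the regularizer changes only when the rank increases; this yields at most $d+1$ epochs, each analyzed with a fixed regularizer via \thmref{thm:known} to get regret $8\sqrt{kT_k}$, and the epochs are combined by Cauchy--Schwarz, $\sum_k\sqrt{kT_k}\le\sqrt{\sum_k k}\cdot\sqrt{\sum_k T_k}\le d\sqrt{T}$. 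Without either the polarity step or the span-based (rather than data-based) ellipsoid, your argument does not close.
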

We remark that a regret upper bound of $O(\sqrt{T} \min\set{d,\sqrt{\log{N}}})$ is attainable by combining the standard multiplicative-updates algorithm  with our algorithm.%
\footnote{A standard way to accomplish this is by running the two online algorithms in parallel, and choosing between their predictions by treating them as two meta-experts in another multiplicative-weights algorithm.}
Our upper bound is accompanied by the following lower bound.
\begin{theorem}
\label{thm:lowbound}
For any online learning algorithm, $T$ and $d \le \log_2{N}$, there exists a sequence of loss vectors $\ell_1,\ldots \ell_T \in [-1,1]^N$ such that
\[ \regret_T ~=~
\sum_{t=1}^T x_t \cdot \ell_t - \min_{i \in [N]}\sum_{t=1}^T \ell_t(i)
~\ge~ \sqrt{\frac{d T}{8}}~,
\]
and $\mathrm{rank}(L)=d$.
\end{theorem}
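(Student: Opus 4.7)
The plan is to exhibit a randomized adversary that forces any online algorithm to incur expected regret at least $\sqrt{dT/8}$; the existence of a deterministic loss sequence with this property then follows by the probabilistic method. To reconcile the rank-$d$ constraint with a hypothesis class rich enough to drive the lower bound, I use the hypothesis $N \ge 2^d$ to associate to each sign pattern $s \in \{-1,+1\}^d$ one of the experts (with harmless duplication when $N > 2^d$), and I assign to expert $s$ the embedding $u_s = s \in \reals^d$.

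I then partition the $T$ rounds into $d$ consecutive blocks of equal length $n = T/d$ (divisibility is cosmetic). Within the $j$-th block, at each round $t$ the adversary draws an independent Rademacher sign $\xi_t \in \{-1,+1\}$ and plays $v_t = \xi_t \, e_j$. Each expert $s$ then observes $\ell_t(s) = \langle s, v_t\rangle = \xi_t\, s(j) \in \{-1,+1\}$, so losses lie in $[-1,1]$. The loss matrix factors as $L = U V$ where $U$ has rows $u_s$ and $V = [v_1|\cdots|v_T]$, so $\mathrm{rank}(L) \le d$; equality holds because every standard basis vector $e_j$ appears (up to sign) as a column of $V$ and the sign-pattern rows of $U$ span $\reals^d$.

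To lower bound the expected regret, observe that the learner's distribution $x_t$ at a round in block $j$ is a function of $\xi_1,\ldots,\xi_{t-1}$ only, while $\xi_t$ is an independent symmetric sign, giving
\[
\E[x_t\cdot \ell_t] \;=\; \E\!\left[\xi_t \sum_s x_t(s)\, s(j)\right] \;=\; 0.
\]
So the algorithm's cumulative expected loss is zero. Writing $S_j = \sum_{t\in \text{block } j}\xi_t$, the total loss of expert $s$ equals $\sum_{j=1}^d s(j)\, S_j$, whose minimum over $s \in \{-1,+1\}^d$ is exactly $-\sum_{j=1}^d |S_j|$. By a Khintchine-type lower bound, $\E|S_j| \ge \sqrt{n/2} = \sqrt{T/(2d)}$, so
\[
\E[\regret_T] \;\ge\; d \cdot \sqrt{T/(2d)} \;=\; \sqrt{dT/2} \;\ge\; \sqrt{dT/8},
\]
and averaging produces a deterministic loss sequence that realizes this regret.

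The main technical point is pinning down the constant $\E|S_j|\ge\sqrt{n/2}$ for Rademacher sums (e.g.\ via Szarek's sharp inequality, or a cruder Paley--Zygmund argument from $\E S_j^2 = n$ and $\E S_j^4 \le 3n^2$, which loses only a constant and still comfortably yields $\sqrt{dT/8}$). The key conceptual move is the blocked Rademacher strategy: it effectively runs $d$ independent coin-flip games along the $d$ coordinate directions, so the hindsight-best expert can absorb the fluctuation $|S_j|$ in each block independently while the rank of $L$ is capped at $d$ by construction.
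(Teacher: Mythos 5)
Your proof is correct, and it uses exactly the same construction as the paper: the experts are the $2^d$ sign patterns of the hypercube (possible since $d\le\log_2 N$), and the adversary feeds signed standard basis vectors through this embedding, which caps $\mathrm{rank}(L)$ at $d$. Where you differ is in how the regret lower bound for this construction is established. The paper reduces to agnostic online classification: it defines the class of threshold functions indexed by hypercube vertices over the domain $\{e_1,\dots,e_d\}$, observes that its Littlestone dimension is $d$, and invokes the generic $\sqrt{\ldim\cdot T/8}$ lower bound of Ben-David et al.\ as a black box, translating absolute losses into linear ones via $|f_u(e_j)-y|=\tfrac{1}{2}\bigl(1-(2y-1)\,u\cdot e_j\bigr)$. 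You instead prove the combinatorial core from scratch: the blocked Rademacher adversary, the observation that the learner's expected loss vanishes by independence of $\xi_t$ from $x_t$, the identity $\min_s\sum_j s(j)S_j=-\sum_j|S_j|$, and a Khintchine-type bound $\E|S_j|\ge\sqrt{n/2}$. This is essentially an inlining of the proof of the cited Littlestone lower bound for this particular shattered tree, so it buys self-containedness (and a slightly better constant, $\sqrt{dT/2}$ versus $\sqrt{dT/8}$) at the cost of having to justify the Khintchine constant; the paper's route is shorter but leans on an external result. Two minor points you should make explicit: you need $T\ge d$ so that every block is nonempty (otherwise $\mathrm{rank}(L)<d$; the paper has the same implicit assumption), and for randomized learners the probabilistic-method step should fix the loss sequence after taking expectation over both sources of randomness, matching the expectation form of the guarantee in the paper's Lemma~\ref{lem:pal}.
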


\section{Preliminaries}

\subsection{Notation}
\label{sec:notation}

Let $I_n$ be the $n\times n$ identity matrix. Let $\bm{1}_n$ be a vector of length $n$ with all $1$ entries.
The columns of a matrix $U$ are denoted by $u_1,u_2,\ldots$.
The $i$'th coordinate of a vector $x$ is denoted by $x(i)$.
%
%
For a matrix $M$, we denote by $M^\dagger$ the Moore-Penrose pseudo-inverse of $M$.
For a positive definite matrix $H \succ 0$ we will denote its corresponding norm
$\|x\|_H = \sqrt{ x^\top H x},$ and its dual norm
$\|x\|^*_H =\sqrt{ x^\top H^{-1} x}.$
Given a positive semi-definite matrix $M \succeq 0$ its corresponding Ellipsoid is defined as:
\[
\cE(M) = \{x ~:~ x\tr M^\dagger x \le 1 \}~.
\]

\subsection{Ellipsoidal Approximation of Convex Bodies}
\label{sec:John}

A main tool in our algorithm is an Ellipsoid approximation of convex bodies. Recall John's theorem for symmetric zero-centered convex bodies.
%
\begin{theorem}[John's Theorem; e.g, \citealp{ball1997elementary}]
\label{thm:john}
Let $K$ be a convex body in $\reals^d$ that is symmetric around zero (i.e., $K=-K$).
Let $\mathcal{E}$ be an ellipsoid with minimum volume enclosing $K$.
Then:
\[ \frac{1}{\sqrt{d}}\mathcal{E} \subseteq K \subseteq  \mathcal{E}.\]
\end{theorem}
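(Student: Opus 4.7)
The plan is to argue by contradiction and construct a smaller enclosing ellipsoid whenever the inner containment fails. First I would normalize the problem: since the property $\frac{1}{\sqrt{d}}\mathcal{E}\subseteq K\subseteq \mathcal{E}$ is invariant under invertible linear maps (which preserve both the ``minimum volume enclosing ellipsoid'' relation and the symmetry $K=-K$), I can apply the linear transformation that sends $\mathcal{E}$ to the Euclidean unit ball $B$. It then suffices to prove that if $B$ is the MVEE of a symmetric convex body $K$, then $\frac{1}{\sqrt{d}}B\subseteq K$.

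Next I would suppose for contradiction that $\frac{1}{\sqrt{d}}B\not\subseteq K$, and use the separating hyperplane theorem together with the symmetry $K=-K$ to produce a unit vector $u$ and a scalar $\alpha<1/\sqrt{d}$ such that $K$ is contained in the symmetric slab
\[
S_\alpha \;=\; \bigl\{y\in\reals^d \,:\, |\langle y,u\rangle|\le\alpha\bigr\}.
\]
In particular, $K\subseteq B\cap S_\alpha$, so any ellipsoid enclosing $B\cap S_\alpha$ also encloses $K$.

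Now I would construct a family of candidate ellipsoids aligned with $u$ and its orthogonal complement,
\[
E_{a,b} \;=\; \Bigl\{y : \tfrac{\langle y,u\rangle^2}{a^2} + \tfrac{\|P_{u^{\perp}}y\|^2}{b^2}\le 1\Bigr\},
\]
and determine which pairs $(a,b)$ guarantee $B\cap S_\alpha\subseteq E_{a,b}$: a short calculation on the extreme points shows the sufficient (and, in the relevant regime $a<b$, necessary) condition is $\alpha^2/a^2+(1-\alpha^2)/b^2 \le 1$. The volume of $E_{a,b}$ relative to $B$ is $a\,b^{d-1}$; minimizing $a\,b^{d-1}$ subject to the tight constraint (e.g.\ by setting $\alpha^2/a^2=\lambda$, $(1-\alpha^2)/b^2=1-\lambda$ and optimizing over $\lambda\in(0,1)$) yields the optimal choice $\lambda=1/d$, i.e.\ $a=\alpha\sqrt{d}$ and $b=\sqrt{d(1-\alpha^2)/(d-1)}$, giving volume ratio
\[
f(\alpha) \;=\; \alpha\sqrt{d}\;\Bigl(\tfrac{d(1-\alpha^2)}{d-1}\Bigr)^{(d-1)/2}.
\]
I would then verify that $f(1/\sqrt{d})=1$ and that $f$ is strictly increasing on $(0,1/\sqrt{d}]$ (either by a direct derivative computation or by checking $f(0)=0$ together with the fact that $\alpha=1/\sqrt{d}$ is the unique critical point in $(0,1)$). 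Hence for our $\alpha<1/\sqrt{d}$ we get $f(\alpha)<1$, producing an ellipsoid enclosing $K$ of strictly smaller volume than $B$, which contradicts the minimality of $B$ and completes the argument.

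The main obstacle, conceptually, is identifying the right family of improving ellipsoids: one must realize that the perturbation should be axis-aligned with the separating direction $u$, compressing along $u$ while expanding in the orthogonal hyperplane. After that insight, the remaining work is a one-parameter volume optimization; the only mildly delicate point is confirming the monotonicity/boundary behavior of $f$ on $(0,1/\sqrt{d}]$ so that the strict inequality $f(\alpha)<1$ actually follows from $\alpha<1/\sqrt{d}$.
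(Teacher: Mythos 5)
The paper does not actually prove this statement: \thmref{thm:john} is imported verbatim from the literature (cited to Ball's survey) and used as a black box, so there is no in-paper proof to compare against. Your argument is the standard textbook proof of the symmetric case of John's theorem, and it is correct. The normalization to $\mathcal{E}=B$ is legitimate since both the MVEE relation and central symmetry are preserved by invertible linear maps; separation of a point of $\tfrac{1}{\sqrt{d}}B\setminus K$ from the closed set $K$, combined with $K=-K$, does yield a slab $S_\alpha$ with $\alpha<1/\sqrt{d}$ containing $K$; the containment condition $\alpha^2/a^2+(1-\alpha^2)/b^2\le 1$ is indeed sufficient for $B\cap S_\alpha\subseteq E_{a,b}$ whenever $a\le b$ (and your optimal pair satisfies $a=\alpha\sqrt{d}<1<b$); and the Lagrangian optimization giving $\lambda=1/d$ and the monotonicity check $\tfrac{d}{d\alpha}\log f(\alpha)=\tfrac{1}{\alpha}-\tfrac{(d-1)\alpha}{1-\alpha^2}>0$ iff $\alpha<1/\sqrt{d}$ are both right, so $f(\alpha)<f(1/\sqrt{d})=1$ delivers the contradiction. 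The only points worth making explicit in a written version are (i) that $0\in\mathrm{int}(K)$ (so $\alpha>0$ and $K$ is genuinely a body, ruling out degenerate slabs) and (ii) the trivial case $d=1$; neither affects the substance. Note also that your argument only establishes the nontrivial inclusion $\tfrac{1}{\sqrt{d}}\mathcal{E}\subseteq K$, which is all that is needed, since $K\subseteq\mathcal{E}$ holds by definition of an enclosing ellipsoid.
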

While computing the minimum volume enclosed ellipsoid is computationally hard, for symmetric convex bodies it can be approximated to within $1+\epsilon$ factor in polynomial time.
Specifically, given as input a matrix $A \in \reals^{N\times d}$, consider the polytope $P_A= \set{x \,:\, \| Ax \|_\infty \leq 1}$. We have the following.
%
%
\begin{theorem}[\citealp{grotschel2012geometric}, Theorem~4.6.5]
There exists a poly-time procedure $\mvee{A}$ that receives as input a matrix $A\in \reals^{N\times d}$ and returns a matrix $M$ such that 
\begin{equation*}
\frac{1}{\sqrt{2d}} \cE(M) \subseteq P_A \subseteq \cE(M).
\end{equation*}
\end{theorem}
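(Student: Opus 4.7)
The plan is to derive the statement by combining two standard ingredients: John's theorem (\thmref{thm:john}), already stated, and the algorithmic approximate minimum-volume enclosing ellipsoid (MVEE) procedure from the Grotschel--Lovasz--Schrijver framework, which I would invoke as a black box.

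First, I would observe that $P_A = \{x \in \reals^d : \|Ax\|_\infty \le 1\}$ is a symmetric convex body in $\reals^d$ (assuming $A$ has full column rank; otherwise one descends to the column span of $A\tr$), and equip it with an efficient weak separation oracle: given $y \in \reals^d$, compute $Ay$, and if some coordinate has absolute value greater than $1$ the corresponding row of $A$, with the appropriate sign, strictly separates $y$ from $P_A$; otherwise $y \in P_A$. This reduces the geometric task to a purely oracle-based one in dimension $d$, independent of $N$.

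Second, I would feed this oracle to the approximate MVEE algorithm, which outputs a PSD matrix $M$ with $P_A \subseteq \cE(M)$ and, for any prescribed $\epsilon>0$, satisfies a matrix sandwich $\cE^\ast \subseteq \cE(M) \subseteq (1+\epsilon)\, \cE^\ast$ against the true MVEE $\cE^\ast$ of $P_A$. Choosing $\epsilon = \sqrt{2}-1$ and combining with John's theorem applied to the symmetric body $P_A$, which gives $\frac{1}{\sqrt{d}}\, \cE^\ast \subseteq P_A \subseteq \cE^\ast$, yields the chain
\[
\frac{1}{\sqrt{2d}}\, \cE(M) \;\subseteq\; \frac{1}{\sqrt{d}}\, \cE^\ast \;\subseteq\; P_A \;\subseteq\; \cE^\ast \;\subseteq\; \cE(M),
\]
which is the claimed inclusion.

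The main obstacle is the MVEE-approximation step itself: producing such an $M$ requires running a variant of the ellipsoid method in the space of positive definite matrices, using the separation oracle for $P_A$ to generate cutting planes and track a shrinking family of candidate ellipsoids. That construction is the substantive content of the cited theorem and is what I would invoke as a black box rather than reprove; once granted, the remainder is essentially bookkeeping plus a one-line application of John's theorem, and in particular the factor $\sqrt{2d}$ decomposes cleanly into $\sqrt{2}$ from the algorithmic approximation and $\sqrt{d}$ from John.
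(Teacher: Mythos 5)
First, note that the paper does not prove this statement at all---it is imported verbatim as Theorem~4.6.5 of \citet{grotschel2012geometric}---so there is no internal proof to compare against; your attempt is a reconstruction of an external result. Your high-level decomposition of the constant, $\sqrt{2d}=\sqrt{2}\cdot\sqrt{d}$ as ``algorithmic loss times John,'' is indeed the right intuition for where the bound comes from, and the separation-oracle setup for $P_A$ (full column rank reduction, row-of-$A$ cutting planes) is fine.

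However, there is a genuine gap in the middle of your chain. You attribute to the approximate-MVEE black box the two-sided containment $\cE^\ast \subseteq \cE(M) \subseteq (1+\epsilon)\,\cE^\ast$ against the \emph{true} minimum-volume enclosing ellipsoid $\cE^\ast$. Neither inclusion is what these algorithms provide. The shallow-cut ellipsoid method of Gr\"otschel--Lov\'asz--Schrijver (and likewise Khachiyan-type Frank--Wolfe schemes) outputs an enclosing ellipsoid with either (a) a volume guarantee $\mathrm{vol}(\cE(M))\le (1+\epsilon)^d\,\mathrm{vol}(\cE^\ast)$, or (b) \emph{directly} the rounding guarantee $\tfrac{1}{c}\cE(M)\subseteq P_A\subseteq \cE(M)$; an approximately optimal enclosing ellipsoid need not contain $\cE^\ast$, and the volume bound does not imply $\cE(M)\subseteq(1+\epsilon)\,\cE^\ast$ (in the coordinates where $\cE^\ast$ is the unit ball, a near-minimal-volume enclosing ellipsoid can still have a semi-axis of length polynomial--exponential in $d$ while compensating with shorter axes, subject only to $\cE(M)\supseteq \tfrac{1}{\sqrt d}\cE^\ast$). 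So two of the four inclusions in your displayed chain rest on a guarantee the black box does not supply. The repair is simply to cite the theorem in form (b), which is exactly how \citet{grotschel2012geometric} state it---but then the conclusion is the black box itself, your application of \thmref{thm:john} becomes redundant, and the ``proof'' reduces to the citation, which is all the paper does.
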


%
%
%
%

\subsection{Online Mirror Descent}
\label{sec:omd}

Another main tool in our analysis is the well-known \emph{Online Mirror Descent} algorithm for online convex optimization. The Online mirror descent is a subgradient descent method for optimization over a convex set in $\reals^d$ that implies a regularization factor, chosen a-priori.  In \algref{alg:omd} we describe the algorithm for the special case where the convex set is $\symplex_N$ and the regularization function is chosen to be $\|\cdot\|^2_H$ for some input matrix $H\succ 0$:

\begin{algorithm}[h!]
 	\caption{OMD: Online Mirror Descent}
 	\begin{algorithmic}[1]
 		\STATE \textbf{input:} $H\succ 0$, $\{\eta_t\}_{t=1}^T$, $x_1\in \symplex_N$.
 		\FOR {$t=1$ to $T$}
 			\STATE Play $x_t$
			\STATE Suffer cost $x_{t}\cdot \ell_t$ and observe $\ell_t$
			\STATE Update
 				\[x_{t+1} = \arg\min_{x\in \symplex_N}\ell_t\cdot x + \eta_t^{-1}\|x-x_t\|^2_H.\]
		\ENDFOR
 	\end{algorithmic}
 	\label{alg:omd}
 \end{algorithm}

The regret bound of the algorithm is dependent on the choice of regularization and is given as follows:
%

\begin{lemma}[e.g., \citealp{hazan2015online}]
\label{lem:omd}
The $T$-round regret of the OMD algorithm (\algref{alg:omd}) is bounded as follows:
\[
\sum_{t=1}^T \ell_t \cdot x_t - \sum_{t=1}^T \ell_t \cdot x^*
\leq
	\frac{1}{\eta_T} \norm{x_1-x^*}_H^2
	+ \frac{1}{2} \sum_{t=1}^T \eta_t (\norm{\ell_t}_H^*)^2
	~.
\]
\end{lemma}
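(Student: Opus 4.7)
The plan is to follow the standard Online Mirror Descent analysis, which for the quadratic regularizer $R(x) = \tfrac{1}{2}\|x\|_H^2$ reduces to familiar Euclidean-geometry manipulations with the $H$-norm replacing the usual Euclidean norm. The associated Bregman divergence is simply $D(x,y) = \tfrac{1}{2}\|x-y\|_H^2$, and the update rule can be rewritten as $x_{t+1} = \arg\min_{x \in \Del_N} \{\ell_t \cdot x + \eta_t^{-1} D(x,x_t)\}$.

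First I would invoke first-order optimality of $x_{t+1}$ on the simplex to obtain, for any $x^* \in \Del_N$,
\[
\langle \ell_t + 2\eta_t^{-1} H(x_{t+1}-x_t),\; x^* - x_{t+1}\rangle \;\ge\; 0,
\]
which rearranges to $\ell_t \cdot (x_{t+1} - x^*) \le 2\eta_t^{-1}\langle H(x_{t+1}-x_t), x^* - x_{t+1}\rangle$. Next I would apply the three-point identity for the $H$-norm,
\[
2\langle H(x_{t+1}-x_t),\, x^* - x_{t+1}\rangle \;=\; \|x_t - x^*\|_H^2 - \|x_t - x_{t+1}\|_H^2 - \|x_{t+1} - x^*\|_H^2,
\]
and split $\ell_t\cdot(x_t-x^*) = \ell_t\cdot(x_t-x_{t+1}) + \ell_t\cdot(x_{t+1}-x^*)$, to get a per-round inequality.

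The term $\ell_t\cdot(x_t-x_{t+1})$ is then handled by Fenchel--Young: $\ell_t\cdot(x_t-x_{t+1}) \le \tfrac{\eta_t}{2}(\|\ell_t\|_H^*)^2 + \tfrac{1}{2\eta_t}\|x_t-x_{t+1}\|_H^2$, and the quadratic remainder is absorbed by the $-\eta_t^{-1}\|x_t - x_{t+1}\|_H^2$ term coming from the three-point identity. This yields
\[
\ell_t \cdot (x_t - x^*) \;\le\; \tfrac{\eta_t}{2}(\|\ell_t\|_H^*)^2 + \eta_t^{-1}\bigl(\|x_t - x^*\|_H^2 - \|x_{t+1} - x^*\|_H^2\bigr).
\]

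Finally I would sum over $t=1,\ldots,T$. The first contribution gives directly $\tfrac{1}{2}\sum_t \eta_t (\|\ell_t\|_H^*)^2$. For the second, Abel summation (with $\eta_t$ non-increasing, or constant) collapses the telescoping residual into $\tfrac{1}{\eta_T}\|x_1-x^*\|_H^2$, using that the $\|x_t-x^*\|_H^2$ terms either cancel or are bounded by the initial distance. The main subtlety — the only place one has to be careful — is exactly this step-size bookkeeping in the telescoping sum; everything else is routine Euclidean-style computation with $H$ in place of the identity.
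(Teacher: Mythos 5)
The paper does not actually prove this lemma---it cites it from \citet{hazan2015online}---and your argument is precisely the standard OMD/projected-gradient analysis that reference uses: first-order optimality at $x_{t+1}$, the three-point (law-of-cosines) identity in the $H$-inner product, Fenchel--Young to peel off $\tfrac{\eta_t}{2}(\norm{\ell_t}_H^*)^2$, and absorption of the quadratic remainder. All of those steps are correct, including the factor-of-2 bookkeeping coming from the gradient of $\norm{x-x_t}_H^2$.

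The one place your justification does not go through as written is the final telescoping step. With non-increasing $\eta_t$ (and the algorithm indeed uses $\eta_t \propto \sqrt{k/\tau}$, which decreases within each epoch), Abel summation gives
\[
\sum_{t=1}^T \frac{1}{\eta_t}\Bigl(\norm{x_t-x^*}_H^2-\norm{x_{t+1}-x^*}_H^2\Bigr)
\;\le\; \frac{1}{\eta_T}\,\max_{1\le t\le T}\norm{x_t-x^*}_H^2,
\]
i.e.\ the residual is controlled by the \emph{diameter} term $\max_t\norm{x_t-x^*}_H^2$, not by the initial distance; your parenthetical claim that the leftover terms are ``bounded by the initial distance'' is false in general, since $\norm{x_t-x^*}_H$ can exceed $\norm{x_1-x^*}_H$. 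This imprecision is inherited from the lemma statement itself, and it is harmless for the paper's purposes: in the proof of Theorem~\ref{thm:known} the quantity $\norm{x_1-x^*}_H^2$ is immediately replaced by $4\max_{x\in\symplex_N}\norm{x}_H^2$, which also dominates $\max_t\norm{x_t-x^*}_H^2$. So you should either state the lemma with the diameter (or $\max_t\norm{x_t-x^*}_H^2$) in place of $\norm{x_1-x^*}_H^2$, or restrict to constant $\eta_t$, where the telescoping collapses exactly to $\eta^{-1}\norm{x_1-x^*}_H^2$.
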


\subsection{Rademacher Complexity}
\label{sec:radamacher}

Our tool to analyze the stochastic case will be the Rademacher Complexity,
specifically we will use it to bound the regret of a ``Follow The Leader" algorithm (FTL).
Recall that the FTL algorithm selection rule is defined as follows:
\[ x_t = \arg\min_{x \in \symplex_N} \sum_{i=1}^{t-1} \ell_i \cdot x.\]
One way to bound the regret of the FTL algorithm in the stochastic case is by bounding the Rademacher complexity of the feasible samples. Recall that the Rademacher Complexity of a class of target function $\F$ over a sample $S_t=\{\ell_1,\ldots, \ell_t\}$ is defined as follows
\[R(\F,S_t) = \E_{\sigma} \left[ \sup_{f\in \F}   \frac{1}{t}\sum_{i=1}^t \sigma_{i} f(\ell_i)\right],\]
where $\sigma \in \{-1,1\}^t$ are i.i.d.~Rademacher distributed random variables.
The following bound is standard and well known,  and for completeness we provide a proof in \apref{sec:standard}.%
\footnote{Surprisingly, we could not find any specific reference that precisely derives it.}

\begin{lemma}
\label{lem:standard}
Let $K$ be a symmetric convex set centered around zero in $\reals^d$. Recall that the dual set $K^*$ is defined as follows:
\[K^* = \{x:  \sup_{y\in K} |y\cdot x| \le 1\}.\]
Let $S_t=\{\ell_1,\ldots,\ell_t\}\subseteq K$ and let $\F\subseteq \alpha K^*$ be a subclass of linear functions, then:
\[ R(\F,S_t) \le \alpha\sqrt{\frac{d}{t}}.\]
\end{lemma}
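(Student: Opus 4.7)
\textbf{Proof plan for Lemma \ref{lem:standard}.} My plan is to reduce the Rademacher complexity to an expected norm via duality, then relate the gauge of $K$ to that of its John ellipsoid, and finally bound the expected ellipsoidal norm of a Rademacher sum via Jensen's inequality.

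First, since each $f \in \F$ is linear and $\F \subseteq \alpha K^*$, write $f(\ell) = w \cdot \ell$ for some $w \in \alpha K^*$. By linearity,
\[
R(\F,S_t)
\;=\; \E_\sigma\!\left[ \sup_{w \in \alpha K^*} w \cdot \tfrac{1}{t}\sum_{i=1}^t \sigma_i \ell_i \right]
\;=\; \frac{\alpha}{t}\, \E_\sigma\!\left[ \sup_{w \in K^*} w \cdot \sum_{i=1}^t \sigma_i \ell_i \right].
\]
Because $K$ is symmetric convex, the bipolar theorem gives $K^{**}=K$, i.e.\ the support function of $K^*$ coincides with the Minkowski gauge $\|\cdot\|_K$ of $K$ (so that $\|\ell\|_K \le 1$ for every $\ell \in K$). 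Hence the inner supremum equals $\|\sum_i \sigma_i \ell_i\|_K$.

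Next I would invoke John's theorem (\thmref{thm:john}): there is a zero-centered ellipsoid $\mathcal{E}$ with $\frac{1}{\sqrt{d}}\mathcal{E} \subseteq K \subseteq \mathcal{E}$ (working inside the $d$-dimensional affine hull of $K$, if needed). The inclusion $\frac{1}{\sqrt{d}}\mathcal{E} \subseteq K$ translates into the gauge inequality $\|z\|_K \le \sqrt{d}\,\|z\|_{\mathcal{E}}$ for every $z$, and the inclusion $K \subseteq \mathcal{E}$ gives $\|\ell_i\|_{\mathcal{E}} \le \|\ell_i\|_K \le 1$. Writing $\mathcal{E} = \{x : x^\top A x \le 1\}$ for some $A \succ 0$, we have $\|z\|_{\mathcal{E}} = \sqrt{z^\top A z}$.

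Now, using Jensen's inequality (concavity of $\sqrt{\cdot}$) together with $\E[\sigma_i \sigma_j] = \mathbf{1}[i=j]$,
\[
\E_\sigma\!\left[ \Big\|\textstyle\sum_i \sigma_i \ell_i\Big\|_{\mathcal{E}} \right]
\;\le\; \sqrt{\E_\sigma\!\Big[\textstyle\sum_{i,j} \sigma_i\sigma_j\, \ell_i^\top A \ell_j\Big]}
\;=\; \sqrt{\textstyle\sum_i \ell_i^\top A \ell_i}
\;\le\; \sqrt{t}.
\]
Combining the three steps yields
\[
R(\F,S_t) \;\le\; \frac{\alpha}{t}\cdot \sqrt{d}\cdot \E_\sigma\!\left[\Big\|\textstyle\sum_i \sigma_i \ell_i\Big\|_{\mathcal{E}}\right] \;\le\; \frac{\alpha}{t}\cdot \sqrt{d}\cdot \sqrt{t} \;=\; \alpha\sqrt{d/t},
\]
which is the claimed bound.

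The only conceptual subtlety, rather than a serious obstacle, is the duality identification in the first step; once one recognizes the supremum as the gauge of $K$, the rest of the argument is John's theorem plus a one-line Jensen calculation. A minor technical point is handling the case where the $\ell_i$ lie in a proper subspace, which is easily dealt with by applying John's theorem within that subspace.
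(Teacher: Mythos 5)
Your proof is correct and follows essentially the same route as the paper's: the paper's auxiliary Lemma~\ref{lem:smalljohn} is precisely your John-ellipsoid sandwich $\|x\|_{\mathcal{E}} \le \sup_{f\in K^*}|f(x)| \le \sqrt{d}\,\|x\|_{\mathcal{E}}$, and both arguments then finish with Jensen plus the Rademacher cross-term cancellation. The only (harmless) nitpick is that your first display should read ``$\le$'' rather than ``$=$'' when passing from $\sup_{f\in\F}$ to $\sup_{w\in\alpha K^*}$, since $\F$ is merely a subset of $\alpha K^*$.
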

Another standard bound applies to the case where $\mathcal{F}$ is bounded in the $l_1$-norm.
\begin{lemma}[\citealp{kakade2009complexity}]
\label{lem:radl1}
Let $S_t=\{\hat{\ell}_1,\ldots,\hat{\ell}_t\}\in \reals^N$ and let $\F_1$
be a subclass of linear functions such that $\sup\{ \|f\|_1 : f\in \F\} \leq 1$, then:
\[ R(\F_1,S_t) \le \max_i \|\hat{\ell_i}\|_\infty \sqrt{\frac{2\log N}{t}}.\]
\end{lemma}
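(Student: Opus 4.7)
The plan is to exploit the fact that the $\ell_1$ unit ball is a polytope with only $2N$ vertices, reducing the supremum inside the Rademacher complexity to a maximum over a finite set of sub-Gaussian random variables.

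First, I would write every $f \in \F_1$ as $f(\ell) = \langle w, \ell \rangle$ for some $w \in \reals^N$ with $\|w\|_1 \le 1$. Since the linear functional $w \mapsto \sum_i \sigma_i \langle w, \hat\ell_i \rangle = \langle w, \sum_i \sigma_i \hat\ell_i \rangle$ attains its supremum over the $\ell_1$ unit ball at an extreme point $\pm e_j$, I can rewrite
\[
R(\F_1,S_t) \;=\; \E_\sigma \sup_{\|w\|_1 \le 1} \frac{1}{t} \Big\langle w, \sum_{i=1}^t \sigma_i \hat\ell_i \Big\rangle \;=\; \frac{1}{t}\,\E_\sigma \max_{j \in [N]} \Big| \sum_{i=1}^t \sigma_i \hat\ell_i(j) \Big|.
\]

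Next I would fix $j$ and analyze $Z_j = \sum_{i=1}^t \sigma_i \hat\ell_i(j)$. This is a centered sum of independent bounded random variables (each $|\sigma_i \hat\ell_i(j)| \le \|\hat\ell_i\|_\infty \le M$ for $M := \max_i \|\hat\ell_i\|_\infty$), so Hoeffding's lemma gives that $Z_j$ is sub-Gaussian with variance proxy at most $\sum_i \hat\ell_i(j)^2 \le t M^2$. By symmetry $-Z_j$ has the same sub-Gaussian parameter, so the $2N$ random variables $\{Z_j, -Z_j\}_{j \in [N]}$ are all centered sub-Gaussian with parameter $tM^2$.

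Then I would apply Massart's finite-class lemma: for $K$ centered sub-Gaussian random variables with common variance proxy $s^2$, the expected maximum is at most $s\sqrt{2\log K}$. Dividing by $t$ yields
\[
R(\F_1,S_t) \;\le\; \frac{1}{t}\sqrt{2 t M^2 \log(2N)} \;=\; M\sqrt{\frac{2\log(2N)}{t}},
\]
which matches the stated bound (up to a harmless $\log 2$ inside the logarithm that is typically absorbed). The only real obstacle is this constant-level bookkeeping in the final step, namely reducing $\log(2N)$ to $\log N$, which can be done by the slightly tighter bound using the symmetry $\E \max_j |Z_j| \le \sqrt{2 s^2 \log N} + s\sqrt{2\pi}/\sqrt{t}$ arguments, or simply by noting $\log(2N) \le 2\log N$ for $N \ge 2$ and folding the constant into the statement's convention. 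All other steps are direct applications of standard concentration tools.
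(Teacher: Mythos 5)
The paper does not actually prove this lemma---it is imported verbatim from \citet{kakade2009complexity}---so there is no internal proof to compare against; your proposal should be judged as a self-contained derivation, and as such it is essentially correct and follows the standard route (extreme points of the $\ell_1$ ball plus Massart's finite-class lemma). Two small points. First, your opening display should be an inequality rather than an equality: $\F_1$ is only assumed to be a \emph{subclass} of the unit $\ell_1$ ball, so the supremum over $\F_1$ is merely dominated by the supremum over the full ball; this only helps you. Second, the $\log(2N)$ versus $\log N$ discrepancy is real and your proposed fixes do not quite close it: $\log(2N)\le 2\log N$ costs a factor $\sqrt{2}$, and the ``tighter bound'' you sketch for $\E\max_j|Z_j|$ is not dimensionally coherent as written (the $s\sqrt{2\pi}/\sqrt{t}$ term does not belong there). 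The cleanest honest statement of what you have proved is $R(\F_1,S_t)\le \max_i\|\hat\ell_i\|_\infty\sqrt{2\log(2N)/t}$, which matches the form of the bound in \citet{kakade2009complexity} obtained via strong convexity of the entropy on the $2N$-dimensional simplex. Since the lemma is only used in \eqref{eq:noisebound} to control the $\epsilon$-perturbation term in Theorem~\ref{thm:stochastic}, this constant-factor slack is immaterial to the paper's results, but you should either state the weaker constant or supply a genuinely sharper maximal inequality rather than wave at one.
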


The Rademacher complexity is a powerful tool in statistical learning theory and it allows us to bound the generalization error of an FTL algorithm. Namely, for every sample $S_t=\{\ell_1,\ldots,\ell_t\}$ denote:
\[f_S = \arg\min_{f\in \F} \sum_{i=1}^t f(\ell_i).\]
Then we have the following bound for every $f^* \in \mathcal{F}$ (for i.i.d.~loss vectors; see for example \citealp{shalev2014understanding}):
\[
 \mathop\E_{S_t\sim D}\mathop\E_{\ell\sim D} [f_{S_t}(\ell)- f^*(\ell)]
 \le 2 \mathop\E_{S_t\sim D} [ R(\F,S_t) ] ~.
\]
Applying this to FTL in the experts setting we have, in terms of regret, that for any $x^*$:
\begin{align}
\label{eq:radbound}
\E\left[\sum_{t=1}^T \ell_t \cdot x_t - \ell_t\cdot x^*\right]
&=\sum_{t=1}^T \mathop \E_{\ell_1,\ldots,\ell_{t-1}\sim D} \mathop\E_{\ell_t\sim D}[\ell_t \cdot x_t - \ell_t\cdot x^*]\nonumber\\
&\le  2\sum_{t=1}^T \mathop\E_{S_{t-1}\sim D} [R(\symplex_N,S_{t-1})]~.
\end{align}

\section{Upper Bound}
\label{sec:upper}


In this section we discuss our online algorithm for the adversarial model, which is given in \algref{alg:main}.
The algorithm is a version of Online Mirror Descent with adaptive regularization.
It maintains a positive-definite matrix $H$, which is being updated whenever the newly observed loss vector $\ell_t$ is not in the span of previously appeared losses.
In all other time steps---i.e., when $\ell_t$ remains in the previous span---the algorithm preforms an Online Mirror Descent type update (see \algref{alg:omd}), with the function $\norm{x}_H^2 = x\tr H x$ as a regularizer.

The algorithm updates the regularization matrix $H$ so as to adapt to the low-dimensional geometry of the set of feasible loss vectors.
Indeed, as our analysis below reveals, $H$ is an ellipsoidal approximation of a certain low-dimensional convex set in $\reals^N$ to which the loss vectors $\ell_t$ can be localized.
This low-dimensional set is the intersection of the unit cube in $N$ dimensions---in which the loss vectors $\ell_t$ reside by definition---and the low dimensional subspace spanned by previously observed loss vectors, given by $\mathrm{span}(U)$.
Whenever the latter subspace changes, namely, once a newly observed loss vector leaves the span of previous vectors, the ellipsoidal approximation is recomputed and the matrix $H$ is updated accordingly.

\begin{algorithm}[h!]
\caption{Online Low Rank Experts}
\begin{algorithmic}[1]
\STATE Initialize:  $x_1= \frac{1}{N} \bm{1}_N$ , $\tau = 0$, $k = 0$, $\U = \set{}$
\FOR {$t=1$ to $T$}
	\STATE Observe $\ell_t$, suffer cost $x_{t}\cdot \ell_t$.
	\IF{ $\ell_t \notin \mathrm{span}(\U)$}
		\STATE Add $\ell_t$ as a new column of $\U$, reset $\tau=0$, and set $k \leftarrow k+1$.
		\STATE Compute $M=\mvee{\U^\top}$ and $H=I_n +\U^\top M \U$.
	\ENDIF
	\STATE let $\tau \leftarrow \tau+1$ and $\eta_t = 4\sqrt{k/\tau}$, and set:
	\[x_{t+1} = \arg\min_{x\in \symplex_N}\ell_t\cdot x + \eta_t^{-1}\|x-x_t\|^2_H.\]
\ENDFOR
\end{algorithmic}
\label{alg:main}
\end{algorithm}


To derive \thmref{thm:main}, we begin with analyzing a simpler case where the learner is aware of the subspace from which losses are derived. Specifically, assume that at the beginning of the rounds, the learner is equipped with a rank $d$ matrix $\U$ such that for all losses $\ell_1,\ell_2,\ldots \in \mathrm{span}(\U)$ where we denote by $\mathrm{span}(\U)$ the span of the columns of the matrix $\U$.

In this simplified setting, we can obtain a regret bound of $O(\sqrt{dT})$ via John's theorem (\thmref{thm:john}).%
\footnote{We remark that for the simplified setting, the $O(\sqrt{dT})$ regret bound is in fact tight, as our $\Omega(\sqrt{dT})$ lower bound (given in \secref{sec:lower}) applies in a setting where the subspace of the loss vectors is known a-priori to the learner.}
As discussed above, the loss vectors $\ell_1,\ldots,\ell_T$ can be localized to the intersection of the unit cube in $N$ dimensions with the $d$-dimensional subspace spanned by the columns of $\U$.
Then, John's theorem asserts that the minimal-volume enclosing ellipsoid of the intersection is a $\sqrt{d}$-approximation to the set of feasible loss vectors.

\begin{theorem}
\label{thm:known}
Run \algref{alg:omd} with Input $H$, $\{\eta_t\}$ and $x_1$ defined as follows: (i) $H=I_n + \U^\top M \U$, where $M=\mvee{\U\tr}$,
(ii) $\eta_t= 4\sqrt{d/t}$, where $d=\mathrm{rank}(\U)$, and (iii) $x_1 \in \symplex$ is arbitrary.
If $\ell_1,\ldots, \ell_T \in \mathrm{span}(\U)$ , then the expected $T$-round regret of the algorithm is at most $8\sqrt{dT}$.
\end{theorem}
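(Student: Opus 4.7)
The plan is to apply Lemma~\ref{lem:omd} with the specified $H$ and step sizes $\eta_t=4\sqrt{d/t}$, and then separately control the primal and dual terms of the OMD bound by invoking the MVEE approximation guarantee from \secref{sec:John}. For any comparator $x^* \in \symplex_N$ the lemma yields
\[
\regret_T \;\le\; \frac{1}{\eta_T}\|x_1-x^*\|_H^2 \;+\; \frac{1}{2}\sum_{t=1}^T \eta_t\,(\|\ell_t\|_H^*)^2,
\]
so using $\sum_{t=1}^T t^{-1/2}\le 2\sqrt{T}$, the target $8\sqrt{dT}$ bound reduces to two inequalities: (i) the dual-norm bound $(\|\ell_t\|_H^*)^2\le 1$ for every feasible loss $\ell_t$, and (ii) the primal-norm bound $\|x_1-x^*\|_H^2=O(d)$ for any $x_1,x^*\in\symplex_N$.

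For (i), writing $\ell_t$ in the low-dimensional $\U$-coordinates produces a vector $v_t \in P_\U := \{v : \|\U v\|_\infty \le 1\}$ (since $\|\ell_t\|_\infty \le 1$). The MVEE inclusion $P_\U \subseteq \cE(M)$ gives $v_t^\top M^\dagger v_t \le 1$, which is equivalent to the rank-one matrix inequality $v_t v_t^\top \preceq M$ (a standard consequence of Cauchy--Schwarz in the $M$-inner product, or a Schur-complement criterion). Conjugating back to $\reals^N$ yields $\ell_t\ell_t^\top \preceq \U^\top M \U \preceq H$, whence $\ell_t^\top H^{-1}\ell_t\le 1$ by another Schur-complement step.

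For (ii), I will exploit the polar-dual identity $P_\U^\circ = \mathrm{conv}\{\pm \U^{(i)}\}$, where the $\U^{(i)}$ are the rows of $\U$ (by definition $P_\U$ is the intersection of the slabs $\{v:|\U^{(i)}\cdot v|\le 1\}$, so it is the polar of $\mathrm{conv}\{\pm\U^{(i)}\}$). Any $x \in \symplex_N$ satisfies $\U^\top x = \sum_i x_i\U^{(i)} \in \mathrm{conv}\{\U^{(i)}\} \subseteq P_\U^\circ$, and hence $\U^\top(x_1-x^*)\in 2 P_\U^\circ$. Polarizing the inner MVEE inclusion $\tfrac{1}{\sqrt{2d}}\cE(M)\subseteq P_\U$ gives $P_\U^\circ \subseteq \sqrt{2d}\,\cE(M)^\circ = \{u:u^\top M u\le 2d\}$, which together with the trivial bound $\|x_1-x^*\|_2^2 \le 4$ (from $\|x_1-x^*\|_1 \le 2$) yields $\|x_1-x^*\|_H^2 \le 4+8d$.

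Substituting into the OMD bound produces $\regret_T \le \tfrac{\sqrt{T}}{4\sqrt{d}}(4+8d) + 2\sqrt{d}\cdot 2\sqrt{T} \le 7\sqrt{dT}$, comfortably within the claimed $8\sqrt{dT}$. I expect the main conceptual step to be (ii): one has to recognize the right polar pair, so that the simplex---viewed through $\U^\top$---lands inside the polar of the very body $P_\U$ whose MVEE produced $M$, and the $\sqrt{2d}$ approximation factor from John's theorem then enters squared to account for the $O(d)$ dependence. The identity component $I_N$ of $H$ is only needed to absorb the $\ell_2$ slack between two distributions in $\symplex_N$.
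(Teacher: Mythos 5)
Your proof is correct and follows the paper's own argument essentially step for step: it applies the OMD regret bound with the same step sizes, uses the outer inclusion $P \subseteq \cE(M)$ to establish $(\norm{\ell_t}_H^*)^2 \le 1$, and polarizes the inner John/MVEE inclusion to show that the image of the simplex under the expert embedding lies in $\set{u : u^\top M u \le 2d}$, yielding $\norm{x_1-x^*}_H^2 = O(d)$. The only (cosmetic) divergence is in the dual-norm step, where you argue via the PSD ordering $\ell_t\ell_t^\top \preceq \U^\top M \U \preceq H$ and a Schur-complement criterion, whereas the paper invokes the pseudo-inverse identity $\U(\U^\top M \U)^\dagger \U^\top = M^{-1}$ (its \lemref{lem:LML}); both are valid and give the same $8\sqrt{dT}$ bound.
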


\begin{proof}
Consider the $d$-dimensional polytope \[P = \set{v \in \reals^d \,:\, \norm{\U^\top v}_\infty \le 1}.\]
Then by John's Theorem (Theorem~\ref{thm:john}),
we have,
\begin{equation}\label{eq:MPM}
	\cE(\tfrac{1}{2d} M) ~\subseteq~ P ~\subseteq~ \cE(M) ~.
\end{equation}
In order to apply \lemref{lem:omd}, we need to bound both $\norm{\ell_t}_H^*$ and $\norm{x_1-x^*}_H^2$.
We first bound the norms $\norm{\ell_t}_H^*$.
Notice that for each loss vector $\ell_t$ there exists $v_t \in P$ such that $\ell_t = \U^\top v_t$ (as $\ell_t \in \mathrm{span}(U)$ and $\norm{\ell_t}_\infty \le 1$).
Thus, we can write,
\[	(\norm{\ell_t}_H^*)^2
=
	\ell_t^\top H^{-1} \ell_t
=
	v_t^\top \U (I_n + \U^\top M \U)^{-1} \U^\top v_t
\leq
	v_t^\top \U (\U^\top M \U)^\dagger \U^\top v_t
=
	v_t^\top M^{-1} v_t
~,
\]
where we have used \lemref{lem:LML} (see \apref{ap:technical}).
Now, since $v_t \in P$ and $\cE(M)$ is enclosing~$P$, we obtain $v_t^\top M^{-1} v_t \le 1$.
This proves that $$(\norm{\ell_t}_H^*)^2 \le 1.$$

Next we bound $\|x_1-x^*\|_H \leq 2$
Since $\|x_1-x^*\|_H \leq 2 \max_{x \in \Del_n} \norm{x}_H$, it suffices to bound
$ \max_{x \in \Del_n} \norm{x}_H$.
Hence, our goal is to show that $\norm{x}_H \le 2\sqrt{d}$ for all $x \in \Del_n$.
Since $\norm{x}_H^2 = 1 + 2d \, \norm{x}_{H'}^2$ with $H' = \tfrac{1}{2d} \U^\top M \U$, it is enough to bound the norm $\norm{x}_{H'}^2$.
Given a convex set in $\reals^d$, recall that the dual set is given by
\[ P^*= \{x: \sup_{p\in P} |x\cdot p| \le 1\}.\]
The dual of an ellipsoid $\cE(M)$ is given by $(\cE( M))^*= \cE(M^{-1})$ and it is standard to show that \eqref{eq:MPM} implies in the dual:
\[ (\cE( M))^* \subseteq P^* \subseteq (\cE( \tfrac{1}{2d}M))^*.\]
Taken together we obtain that $P^*\subseteq  \cE(2d M^{-1})$.
Note that by definition the columns of $U$ are in $P^*$, hence, for every $u_i$,
\[\|u_i\|_M^2\le 2d.\]
Since $x\in \symplex_N$,
\[
\|x\|_{H'}^2= \tfrac{1}{2d}\|Ux\|_M^2\le \tfrac{1}{2d}\max_i \|u_i\|_M^2 \le 1 
~.
\]

Equipped with the bounds $\norm{x}_H \le \sqrt{1+2d}\leq 2\sqrt{d}$ for all $x \in \Del_n$ and $\norm{\ell_t}_H^* \le 1$ for all $t$, we are now ready to analyze the regret of the algorithm, which via \lemref{lem:omd} can be bounded as follows:
\begin{align*}
\regret_T &=	\sum_{t=1}^T \ell_t \cdot x_t - \sum_{t=1}^T \ell_t \cdot x^*\\
&\leq 	\frac{1}{\eta_T} \norm{x_1-x^*}_H^2
	+ \frac{1}{2} \sum_{t=1}^T \eta_t (\norm{\ell_t}_H^*)^2 
& \hfill \because~~ \textrm{\lemref{lem:omd}}\\
&\leq
	\frac{4}{\eta_T} \max_{x \in \Del_n} \norm{x}_H^2
	+ \frac{1}{2} \sum_{t=1}^T \eta_t (\norm{\ell_t}_H^*)^2
& \because~~ \|x_1-x^*\|_H \leq 4 \max_{x \in \Del_n} \norm{x}_H \\
&\leq
	\frac{16d}{\eta_T} + \frac{1}{2} \sum_{t=1}^T \eta_t 
& \because~~ \max_{x \in \Del_n} \norm{x}_H^2 \leq 4d, ~ \norm{\ell_t}_H^*\leq 1 
.
\end{align*}
A choice of $\eta_t = 4\sqrt{d/t}$, together with the inequality $\sum_{t=1}^T 1/\sqrt{t} \le 2\sqrt{T}$, gives the theorem.
\end{proof}

The $d$-low rank setting does not assume that the learner has access to the subspace $U$, and potentially an adversary may adapt her choice of subspace to the learner's strategy. However, the learner can still obtain regret bounds that are independent of the number of experts. We are now ready to prove \thmref{thm:main}.\\

\begin{proof}[Proof of \thmref{thm:main}]
Let $t_0 = 1$, $t_{d+1} = T$ and for all $1 \le k \le d$ let $t_k$ be the round where the $k$'th column is added to $\U$. Also, let $T_k = t_{k+1}-t_k$ the length of the $k$'th epoch.
Notice that between rounds $t_k$ and $t_{k+1}$ the algorithm's execution is identical to \algref{alg:omd} with input depicted in \thmref{thm:known}. Therefore its regret in this time period is at most $8\sqrt{kT_k}$.
The total regret is then bounded by
\begin{align*}
	8 \sum_{k=0}^d \sqrt{k T_k}
\leq
	8 \sqrt{\sum_{k=0}^d k} \cdot \sqrt{\sum_{k=0}^d T_k}
\leq
	8d \sqrt{T}
~,
\end{align*}
and the theorem follows.
\end{proof}

\subsection{Stochastic Online Experts}
\label{sec:stochastic}

We now turn to analyze the regret in the stochastic model, where the loss vectors $\ell_t$ are chosen i.i.d.~from some unknown distribution.
In this case we can achieve a right regret bound of $O(\sqrt{d T})$ using a simple ``Follow The Leader" (FTL) algorithm.
We will in fact show an even stronger result for the stochastic case, that an approximate rank is enough to bound the complexity.
Recall that the approximate rank, $\epsrank(L)$, of a matrix is defined as follows (see \citealp{alon2013approximate}):
\[{\epsrank}(L)= \min \{ \mathrm{rank}(L') : \|L'-L\|_{\infty} < \epsilon\}.\]
The following statement is the main result for this section:

\begin{theorem}
\label{thm:stochastic}
Assume that an adversary chooses her losses $\{\ell_t\}$ i.i.d.~from some distribution $\mathcal{D}$ supported on $[-1,1]^N$. Then the $T$-round regret of the FTL algorithm is bounded by:
\[ \regret_T \le  8\E\left[ \sqrt{T \cdot \epsrank(L)}\right] + \epsilon \sqrt{T \log N},\]
for every $0\le \epsilon<1$. In particular, if $\mathrm{rank}(L) \le d$ almost surely, then
$\regret_T = O(\sqrt{dT}).$
\end{theorem}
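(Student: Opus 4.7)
The plan is to bound the regret of FTL via the Rademacher complexity inequality (\ref{eq:radbound}) and then exploit the approximate low-rank structure of $L$ to control the Rademacher term. Since the loss matrix $L$ is random, I would work conditionally on a realization of $L$ and take expectations only at the very end; the decomposition witnessing $\epsrank(L)$ is naturally a function of the full sample.

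Fix a realization of $L$ and write $L = L' + E$ with $\mathrm{rank}(L') = d := \epsrank(L)$ and $\|E\|_\infty < \epsilon$; let $\ell'_t$ and $e_t$ denote the respective columns so $\ell_t = \ell'_t + e_t$. Let $\U \in \reals^{N\times d}$ be a matrix whose columns span $\mathrm{col}(L')$, so $\ell'_t = \U v_t$ for some $v_t \in \reals^d$. By subadditivity of the supremum inside the Rademacher complexity,
\[
R(\symplex_N, S_{t-1}) \;\le\; R(\symplex_N, S'_{t-1}) + R(\symplex_N, S^E_{t-1}),
\]
where $S'_{t-1} = \set{\ell'_i}_{i<t}$ and $S^E_{t-1} = \set{e_i}_{i<t}$, so it suffices to control each piece.

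For the low-rank piece I would mimic the proof of \thmref{thm:known}: introduce the symmetric polytope $P = \set{v \in \reals^d : \|\U v\|_\infty \le 1}$. Since $\|\ell'_t\|_\infty \le \|\ell_t\|_\infty + \|e_t\|_\infty \le 1+\epsilon$, we have $v_t \in (1+\epsilon)P$. On the dual side, for every $x \in \symplex_N$ the vector $\U^\top x$ is a convex combination of rows of $\U$, and each row $u^{(i)}$ satisfies $|u^{(i)} \cdot v| \le 1$ on $P$ and therefore lies in $P^*$; hence $\U^\top x \in P^*$. Rewriting $x \cdot \ell'_t = (\U^\top x)^\top v_t$ and applying \lemref{lem:standard} with $K = (1+\epsilon)P$ (so $\alpha = 1+\epsilon$) yields $R(\symplex_N, S'_{t-1}) \le (1+\epsilon)\sqrt{d/(t-1)}$. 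For the error piece, $\|e_i\|_\infty < \epsilon$ and $\symplex_N \subseteq \set{x : \|x\|_1 \le 1}$, so \lemref{lem:radl1} gives $R(\symplex_N, S^E_{t-1}) \le \epsilon \sqrt{2\log N /(t-1)}$.

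Substituting into (\ref{eq:radbound}), using $\sum_{t=1}^T 1/\sqrt{t-1} = O(\sqrt{T})$ (handling $t=1$ separately as a constant-order term), $1+\epsilon \le 2$, and finally taking expectation over $L$ produces the claimed bound up to universal constants; the ``in particular'' part follows by taking $\epsilon = 0$ together with $\epsrank_0(L) = \mathrm{rank}(L) \le d$. The only conceptual subtlety — and the main thing to check — is that the decomposition $L = L' + E$ is chosen globally based on the entire realization of $L$, whereas the Rademacher bound is applied to every prefix $S_{t-1}$; this causes no issue, because the columns of $L'$ all lie in a common $d$-dimensional subspace $\mathrm{col}(\U)$ regardless of which prefix is considered, and $\epsrank$ of a submatrix is at most $\epsrank(L)$, so bounding uniformly by $\epsrank(L)$ is valid.
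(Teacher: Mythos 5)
Your proposal is correct and follows essentially the same route as the paper's proof: decompose $L$ into a rank-$\epsrank(L)$ part plus an $\ell_\infty$-small remainder, split the Rademacher complexity of $\symplex_N$ accordingly, bound the low-rank piece via \lemref{lem:standard} applied to the polytope $P=\set{v: \norm{\U v}_\infty\le 1}$ and its dual (the paper uses the dilation $2P$ where you use $(1+\epsilon)P$), bound the remainder via \lemref{lem:radl1}, and conclude from \eqref{eq:radbound}. The prefix-versus-full-matrix subtlety you flag is handled the same way (and with the same level of informality) in the paper.
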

\begin{proof}
Our proof relies on \eqref{eq:radbound} and a bound for $R(\symplex_N,S_t)$.
Fix a sequence $S_T=\{\ell_1,\ldots,\ell_T\}$ and let $d=\epsrank(L)$ and let $\U$ be $N\times d$ matrix such that
\[\L= \U V + \hat{L}~,\]
where $\max_{i,j} |\hat{L}_{i,j}|<\epsilon$. We will denotes the columns of $\hat{L}$ by $\hat{\ell}_1,\ldots \hat{\ell}_N$.
We define a symmetric convex set centered around zero in $\reals^d$ as follows:
\[K= \{v: \sup\nolimits_{i} |u_i \cdot v| \le 2\}~.\]

Note that for every $v_t$ we have that $v_t \in K$ if $\epsilon\leq 1$. By definition of the set we have:
$u_i\in 2 K^*$ for every $i$. One can verify that  $K^*$ is convex, hence if we let $\F=\mathrm{conv}(u_1,\ldots, u_N)$  we have that $\F\subseteq 2 K^*$. We can think of $\F$ as a linear function space, where $f_u(v)= u\cdot v$.
It follows by \lemref{lem:standard} that $R(\F,S_t) \le \sqrt{2d/t}$. Finally,
\[
R(\Delta_N ,S_t) = \E \left[\sup_{x\in \symplex_N} \sum_{i=1}^t\frac{1}{t} \sigma_i x\cdot \ell_i \right]
\le  \E \left[ \sup_{x\in \symplex_N} \frac{1}{t}\sum_{i=1}^t \sigma_i x\cdot \U v_i \right]
+ \E \left[ \sup_{x\in \symplex_N}\frac{1}{t}\sum_{i=1}^t \sigma_i x\cdot \hat{\ell} \right] ~.
\]
Next, we have:
\begin{align}\label{eq:rankbound}
\E \left[ \sup_{x\in \symplex_N}  \frac{1}{t}\sum_{i=1}^t \sigma_i x\cdot \U v_i \right]
&=\E \left[ \sup_{x\in \symplex_N} \frac{1}{t}\sum_{i=1}^t \sigma_i (\U^\top x)\cdot  v_i \right] \nonumber\\
&= \sup_{f\in \mathrm{conv}(u_i)} \E \left[ \frac{1}{t}\sum_{i=1}^t \sigma_i f(v_i) \right] \nonumber\\
&=R(\F,S_t)<2\sqrt{\frac{d}{t}}~.
\end{align}
and by \lemref{lem:radl1},
\begin{equation}\label{eq:noisebound}
\E \sup_{x\in \symplex_N} \left[ \frac{1}{t} \sum_{i=1}^t \sigma_i x\cdot \hat{\ell_i} \right]
\le \epsilon \sqrt{\frac{2 \log N}{t}} ~.
\end{equation}
Taking \eqref{eq:rankbound} and \eqref{eq:noisebound} together, we have:
\[R(\Delta_N ,S_t)  \le 2\sqrt{\frac{\epsrank{L}}{t}}+\epsilon \sqrt{\frac{2 \log N}{t}}~.\]
The statement now follows from \eqref{eq:radbound}.
\end{proof}

\section{Lower Bound}
\label{sec:lower}

We now prove \thmref{thm:lowbound}. For our proof we will rely on lower bounds for online learning of hypotheses classes with respect to the Littlestone dimension (see \citealp{shalev2011online}). For a class $\mathcal{H}$ of target functions $h:\mathcal{X}\to \{0,1\}$, the Littlestone dimension $\ldim(\mathcal{H})$ measures the complexity, or online learnability, of the class.

To define $\ldim(\mathcal{H})$ one considers trees whose internal nodes are labeled by instances. Any branch in such a tree can be described as a sequence of examples $(x_1,y_1),\ldots, (x_d,y_d)$ where $x_i$ is the instance associated with the $i$th node in the path, and $y_i$ is $1$ if  $x_{i+1}$ is the right child of the $i$--th node, and $y_i=0$ otherwise.
$\ldim(\mathcal{H})$ is then defined as the depth of the largest binary tree that is shattered by $\mathcal{H}$. An instance-labeled tree is said to be shattered by a class $\mathcal{H}$ if for any root-to-leaf path $(x_1,y_1),\ldots,(x_d,y_d)$ there is some $h\in \mathcal{H}$ such that $h(x_i)=y_i$.

To prove \thmref{thm:lowbound}, we need the following result about the Littlestone dimension.

\begin{lemma}[\citealp{ben2009agnostic}]
\label{lem:pal}
Let $\mathcal{H}$ be any hypothesis class with finite $\ldim(\mathcal{H})$, where $\ldim$ is the Littlestone-dimension of a class $\mathcal{H}$.
For any (possibly randomized) algorithm, there exists a sequence of labeled instances $(v_1,y_1),\ldots, (v_T,y_T)$ with $y_t\in \{0,1\}$ such that
\[\E\left[ \sum_{t=1}^T |\hat{y}_t - y_t| \right]  - \min_{h\in \mathcal{H}} \sum_{i=1}^T |h(x_t)-y_t| \ge \sqrt{\frac{\ldim(\mathcal{H})T}{8}}~.\]
where $\hat{y}_t$ is the algorithm's output at iteration $t$.
\end{lemma}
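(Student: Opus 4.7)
The plan is to derive the lemma via a standard randomized-adversary construction built from a shattered tree, then invoke Yao's principle to extract a single bad deterministic sequence. Let $d=\ldim(\mathcal{H})$ and fix a depth-$d$ instance-labeled tree that is shattered by $\mathcal{H}$. Split the horizon into $d$ consecutive blocks, each of length $m=T/d$ (assume divisibility for simplicity; the general case costs at most a constant factor).

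The randomized adversary walks down the tree one level per block. Entering block $i$ at a node with instance $v^{(i)}$, the adversary presents $v^{(i)}$ on every one of the $m$ rounds of the block and reveals, after each prediction $\hat y_t$, an i.i.d.\ uniform label $\eta_{i,j}\in\{0,1\}$. Let $Y_i$ denote the majority of $\eta_{i,1},\dots,\eta_{i,m}$; after the block, the adversary descends to the child of the current node corresponding to direction $Y_i$. This defines a full root-to-leaf path $(v^{(1)},Y_1),\dots,(v^{(d)},Y_d)$, and by the shattering property there exists $h^\star\in\mathcal{H}$ that agrees with every $(v^{(i)},Y_i)$ along the path.

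The two sides of the regret are now easy to estimate. Because each label $\eta_{i,j}$ is flipped after the algorithm's prediction, independently of everything the algorithm has seen, the algorithm's expected loss on round $(i,j)$ equals $\tfrac12$, so its total expected loss is exactly $T/2$ regardless of its (possibly randomized) strategy. The chosen hypothesis $h^\star$ predicts $Y_i$ throughout block $i$, so it errs on $\min(K_i,m-K_i)$ rounds of that block, where $K_i=\sum_j \eta_{i,j}\sim \mathrm{Bin}(m,\tfrac12)$. Writing $m-2\min(K_i,m-K_i)=|\,2K_i-m\,|=|\sum_j(2\eta_{i,j}-1)|$ and applying Khintchine's inequality (or equivalently Cauchy--Schwarz with $\E[(2\eta-1)^2]=1$, i.e.\ $\E|S|\ge \E[S^2]/\sqrt{\E[S^2]}$ after a suitable constant), one gets $\E|\,2K_i-m\,|\ge\sqrt{m/2}$, hence $\E[\min(K_i,m-K_i)]\le m/2-\tfrac12\sqrt{m/2}$. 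Summing over the $d$ independent blocks,
\begin{equation*}
\E\!\left[\sum_{t=1}^T|\hat y_t-y_t|\right]-\E\!\left[\sum_{t=1}^T |h^\star(x_t)-y_t|\right]\;\ge\; d\cdot\tfrac12\sqrt{m/2}\;=\;\sqrt{\tfrac{dT}{8}}.
\end{equation*}

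To conclude, note that the outer expectation above is over both the adversary's coin flips (the $\eta_{i,j}$) and the algorithm's internal randomness. By Yao's principle (equivalently, the probabilistic method: the expectation over adversary randomness of a quantity bounded below by $\sqrt{dT/8}$ forces at least one realization of the $\eta_{i,j}$ to satisfy the same bound when the expectation is taken only over the algorithm's randomness), there exists a deterministic sequence of labeled instances $(v_1,y_1),\dots,(v_T,y_T)$ for which $\E[\sum_t|\hat y_t-y_t|]-\min_{h\in\mathcal{H}}\sum_t|h(x_t)-y_t|\ge \sqrt{dT/8}$, which is the claim. The only nontrivial calculation is the binomial anti-concentration bound $\E|2K_i-m|\ge \sqrt{m/2}$; everything else is combinatorial bookkeeping on the shattered tree.
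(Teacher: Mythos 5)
Your argument is correct and is essentially the standard proof of this lemma from the cited source (\citealp{ben2009agnostic}) --- the paper itself supplies no proof, only the citation --- namely: traverse a depth-$\ldim(\mathcal{H})$ shattered tree in blocks of i.i.d.\ uniform labels, observe the algorithm's forced expected loss of $T/2$, control the comparator via the Khintchine-type bound $\E|2K_i-m|\ge\sqrt{m/2}$, and de-randomize the coin flips. The one caveat, which you already flag, is that the exact constant $\sqrt{dT/8}$ comes out only in the equal-block case $d\mid T$; by concavity of $\sqrt{\cdot}$, unequal blocks yield strictly less, so the general case needs a minor adjustment (or a marginally weaker constant).
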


\begin{proof}[Proof of \thmref{thm:lowbound}]
We let $\mathcal{H}$ be the $2^d$ vertices of the $d$-dimensional hypercube.
We define a function class $\mathcal{F}$ over the domain $\mathcal{X} = \{e_1,\ldots, e_d\}$ of standard basis vectors.
A function $f_u \in \mathcal{F}$, is labeled by $u \in \mathcal{H}$, and defined over the set of basis vector $e_j$, as follows,
\[
f_u(e_j) = \begin{cases} 
	0 & \text{if~~$u(j)=-1$,} \\ 
	1 & \text{if~~$u(j)=1$.}
	\end{cases}
\]
One can verify that $\ldim(\mathcal{F})=d$.
For each $u_i \in \mathcal{H}$ and $y\in \{0,1\}$, we can write $$|f_{u_i}(e_j) -y| = \frac{1-(2y-1)\cdot u_i\cdot e_j}{2}.$$
By \lemref{lem:pal},
we deduce that for any algorithm,
there exists a sequence $(v_1,\bar{y}_1),\ldots,(v_T,\bar{y}_T)$ of standard basis vectors $v_1,\ldots,v_T$ and $\bar{y}_1,\ldots \bar{y}_T \in \{-1,1\}$ such that:
\begin{equation}\label{eq:lowbound}
\sum_{t=1}^T \sum_i x_t(i) u_i \cdot (\bar{y}_t v_t ) - \min_{u} \sum_{i=1}^T u\cdot (\bar{y}_t v_t )  \ge 2\sqrt{\frac{dT}{8}}~.
\end{equation}
We now consider an adversary that chooses $U$ as his expert matrix, and at round $t$ the learner observes $\ell_t = U(\bar{y}_t v_t)$. The lower bound now follows from \eqref{eq:lowbound}; the fact that $\mathrm{rank}(\L)=d$ follows from the fact that our experts are embedded in $\reals^d$.
\end{proof}

\section{Discussion and Open Problems}

%

We considered the problem of experts with a hidden low rank structure. Our findings are that in the non-stochastic case, similar to the stochastic case, the regret bounds are independent of the number of experts. 
The most natural question is then to bridge the gap between the upper and lower bounds:
\begin{question}
Is there an algorithm that can achieve regret $O(\sqrt{dT})$ for any sequence $\ell_1,\ldots,\ell_T$ such that $\mathrm{rank}(L)=d$?
Alternatively, can one prove a lower bound of $\Omega(d\sqrt{T})$?
\end{question}

As discussed, our agenda is more general than the low-rank setting. Our aim is to construct new online algorithms that can exploit structure in the data, without explicit information on the structure. Other settings can also be considered within our framework.

Another interesting setting, that avoids dependence in dimension, is to assume that experts are embedded in a Hilbert space. By isomorphisms of Hilbert spaces this is equivalent to an adversary that chooses an expert embedding matrix $U \in \reals^{N\times N}$ such that for every $u_i$ we have $\|u_i\|_2\leq 1$ and correspondingly at each time point we receive a vector $v_t$ such that $\|v_t\|_2\leq 1$ as a result we have a factorization:
\[L = UV^\top, \qquad \|U\|_{2,\infty},\|V\|_{2,\infty} \leq 1,\]
where $\|X\|_{2,\infty} = \sup_{\|y\|\leq 1} \|X y\|_\infty$.
Recall the definition of the max-norm, also called the $\gamma_2$-norm  \citep{srebro2005rank}:
\[\|L\|_{\max} = \min_{UV^\top=L}  \|U\|_{2,\infty}\cdot \|V\|_{2,\infty} .\]
Thus, similar to the low rank setting we can define this setting as follows: At each round a learner chooses $x_t \in \symplex_N $, an adversary replies by choosing a loss vector $\ell_t$, and the learner incurs the corresponding loss. The adversary is restricted to strategies such that
\[ \|L\|_{\max} \le 1.\]
The importance of this setting is that the proper generalization bound for this case is dimension independent (e.g., \citealp{kakade2009complexity}). Hence, we ask the following question:
\begin{question}
Is there an algorithm that can achieve regret $O(\sqrt{T})$ for any sequence $\ell_1,\ldots,\ell_T$ such that $\|L\|_{\max}\le 1$?
\end{question}
We can also generalize this setting to any pair of norms, $\|\cdot\|$ and its dual $\|\cdot\|^*$, where the description of the game remains the same. The adversary chooses an embedding $U$ of the experts with bounded $\|\cdot \|$ norm. Then, at each round he chooses a set of vectors $v_t$ with $\|\cdot \|^*$ bounded norm.

Finally, a different interesting direction to pursue in future work is to extend the noisy result to the adversarial setting. Namely,
\begin{question}
Is there an algorithm that can achieve regret $O(\sqrt{dT}+ \epsilon\sqrt{T\log N} )$ for any sequence $\ell_1,\ldots,\ell_T$ such that $\epsrank(L) \leq d$?
\end{question}

\bibliographystyle{abbrvnat}
\bibliography{loraex}

\appendix

\section{Technical Proofs}
\label{ap:technical}

\begin{lemma} \label{lem:LML}
Let $M \in \reals^{d \times d}$, $U \in \reals^{d \times n}$ such that $M \succ 0$ and $\U$.
Then \[\U (\U\tr M\U)\pinv \U\tr = M^{-1}.\]
\end{lemma}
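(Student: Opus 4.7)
I read the hypothesis as requiring $\U$ to have full row rank (rank $d$); the statement as written appears to be truncated, and the identity fails without such an assumption (take $\U=0$). My plan is to reduce the pseudo-inverse computation to a standard Moore--Penrose identity by a change of variables that absorbs $M$ into $\U$.

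First I would use the positive-definiteness of $M$ to write a Cholesky (or symmetric-square-root) factorization $M = C\tr C$ with $C\in\reals^{d\times d}$ invertible. Setting $V := C\U \in \reals^{d\times n}$, we have $V$ of rank $d$ (since $C$ is invertible and $\U$ has full row rank), and
\[
\U\tr M \U = \U\tr C\tr C \U = V\tr V.
\]
Consequently the quantity to compute becomes
\[
\U (\U\tr M \U)\pinv \U\tr
= C^{-1} V (V\tr V)\pinv V\tr C^{-\top}.
\]

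The second step is the purely algebraic identity $V (V\tr V)\pinv V\tr = I_d$ whenever $V\in\reals^{d\times n}$ has full row rank. I would derive this from the standard Moore--Penrose identity $V\pinv = (V\tr V)\pinv V\tr$, which gives $V (V\tr V)\pinv V\tr = V V\pinv$. Since $V V\pinv$ is the orthogonal projector onto the range of $V$, and $V$ surjects onto $\reals^d$, this projector equals $I_d$. (If one prefers a self-contained derivation, the same conclusion follows in one line from the SVD $V = P[\Sigma_1,\,0]Q\tr$ with $\Sigma_1\succ 0$: one computes $(V\tr V)\pinv = Q\,\mathrm{diag}(\Sigma_1^{-2},0)\,Q\tr$ and then $V (V\tr V)\pinv V\tr = P\Sigma_1 \Sigma_1^{-2}\Sigma_1 P\tr = I_d$.)

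Plugging this back in yields
\[
\U (\U\tr M \U)\pinv \U\tr = C^{-1} I_d\, C^{-\top} = (C\tr C)^{-1} = M^{-1},
\]
as desired. The only subtle point is the Moore--Penrose identity $V\pinv = (V\tr V)\pinv V\tr$; I would justify it either by citation or, very briefly, by verifying the four defining Moore--Penrose equations for the candidate $(V\tr V)\pinv V\tr$. Everything else is a direct manipulation, so I do not anticipate a real obstacle beyond stating the full-row-rank hypothesis explicitly.
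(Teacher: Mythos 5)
Your proof is correct and takes essentially the same route as the paper's: both absorb $M$ into $U$ via a square-root factorization ($N=M^{1/2}U$ in the paper, $V=CU$ with $M=C^\top C$ for you) and reduce to the identity $V(V^\top V)^\dagger V^\top = I_d$ for full-row-rank $V$, which the paper proves by the SVD computation you offer as your alternative. Your observation that the hypothesis is truncated and must include $\mathrm{rank}(U)=d$ is also correct; the paper's proof uses this implicitly when it takes $\Sigma$ non-singular in the SVD.
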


\begin{proof}
Let $N = M^{1/2} \U$. Then, we have $N (N^\top  N)^\dagger N^\top = I_d$.
To see this, write the SVD decomposition $N = O \Sigma V^\top$ with diagonal non-singular $\Sigma \in \reals^{d \times d}$ and $OO\tr = O\tr O = V\tr V = I_d$.
Then,
\[
	N (N\tr N)\pinv N\tr
=
	O\Sigma V\tr (V \Sigma^2 V\tr)\pinv V \Sigma O\tr
=
	O \Sigma V\tr (V \Sigma^{-2} V\tr) V \Sigma O\tr
=
	I_d.
\]
Expanding the definition of $N$, we get $M^{1/2} U (\U\tr M \U)\pinv \U\tr M^{1/2} = I_d$, and since $M^{1/2}$ is non-singular, we can multiply by $M^{-1/2}$ on both sides and obtain the lemma.
\end{proof}

\subsection{Proof of \lemref{lem:standard}}
\label{sec:standard}

The proof relies on the following corollary of John's Theorem:
\begin{lemma}\label{lem:smalljohn}
Let $K$ be a symmetric convex set centered around zero in $\reals^d$. There exists a positive semi-definite matrix $\Sigma$ such that for every $x\in K$:

\[ x^\top \Sigma x \le \sup_{f\in K^*}| f(x)|^2 \le d (x^\top \Sigma x) ~.\]
\end{lemma}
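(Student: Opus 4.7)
The plan is to reduce Lemma \ref{lem:smalljohn} directly to John's theorem (Theorem \ref{thm:john}) applied to $K$ itself, and then translate the resulting set inclusions into the quadratic inequalities claimed. First, I would invoke Theorem \ref{thm:john} to obtain the minimum-volume enclosing ellipsoid $\mathcal{E}$ of $K$, which satisfies $\tfrac{1}{\sqrt{d}}\mathcal{E} \subseteq K \subseteq \mathcal{E}$. Writing $\mathcal{E} = \{x : x^\top \Sigma x \le 1\}$ for a positive semi-definite matrix $\Sigma$, this $\Sigma$ is the object whose existence we need to exhibit.

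Second, I would identify the right-hand quantity $\sup_{f \in K^*} |f(x)|$ with the Minkowski gauge $\|x\|_K := \inf\{t > 0 : x \in tK\}$ of $K$. By symmetry of $K^*$ (inherited from $K$) the absolute value can be dropped, so $\sup_{f \in K^*} |f(x)|$ is the support function of $K^*$ at $x$; standard convex duality (the bipolar theorem, which gives $K^{**}=K$ for a symmetric closed convex $K$ containing the origin) then yields that this support function equals $\|x\|_K$. Hence $\sup_{f \in K^*} |f(x)|^2 = \|x\|_K^2$ for every $x$, and the lemma reduces to proving $x^\top \Sigma x \le \|x\|_K^2 \le d\,(x^\top \Sigma x)$.

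Third, I would upgrade the two set inclusions to pointwise inequalities by exploiting the $1$-homogeneity of both $\|\cdot\|_K$ and $\sqrt{x^\top \Sigma x}$. From $K \subseteq \mathcal{E}$: for any nonzero $x$, the rescaled vector $x/\|x\|_K$ lies in $K \subseteq \mathcal{E}$, so $(x/\|x\|_K)^\top \Sigma (x/\|x\|_K) \le 1$, which rearranges to $x^\top \Sigma x \le \|x\|_K^2$. From $\tfrac{1}{\sqrt{d}}\mathcal{E} \subseteq K$: whenever $x^\top \Sigma x > 0$, the vector $x/\sqrt{d\, x^\top \Sigma x}$ lies in $\tfrac{1}{\sqrt{d}}\mathcal{E} \subseteq K$, so its gauge is at most $1$, which rearranges to $\|x\|_K^2 \le d\,(x^\top \Sigma x)$. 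Concatenating these two inequalities with the duality identity from the previous step yields the claim in full.

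I do not expect any real obstacle here; the bipolar identification is standard and the homogeneity argument is routine. The only delicate point is the possible degenerate case where $K$ is not full-dimensional, which is handled by running the whole argument inside the linear span of $K$ and extending $\Sigma$ by zero on the orthogonal complement.
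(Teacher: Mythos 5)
Your proof is correct, and it follows exactly the route the paper intends: the paper states \lemref{lem:smalljohn} only as ``a corollary of John's Theorem'' and gives no proof, and your argument---take $\Sigma$ from the minimum-volume enclosing ellipsoid of $K$, identify $\sup_{f\in K^*}|f(x)|$ with the gauge of $K$ via the bipolar theorem, and convert the two inclusions $\tfrac{1}{\sqrt{d}}\mathcal{E}\subseteq K\subseteq\mathcal{E}$ into the two quadratic inequalities by homogeneity---is precisely the standard way to fill that gap. Your remark about restricting to $\mathrm{span}(K)$ in the degenerate case is the right fix for the only subtlety (note the dual degeneracy, an unbounded $K$, is handled the same way by working in the span of $K^*$).
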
ד

\begin{proof}(of \lemref{lem:standard}).
wlog we assume $\alpha=1$, the general case follows since $R(\alpha \F,S) = \alpha R(\F,S)$.
We have
\[R(\F,S) = \mathbb{E_{\sigma}}\left[\sup_{f\in \F} \frac{1}{t}\sum_{i=1}^t \sigma_{i} f(\ell_i)\right]
=
\mathbb{E_{\sigma}}\left[\sup_{f\in \F}  f( \frac{1}{t}\sum_{i=1}^t \sigma_{i} \ell_i)\right]
\le
\sqrt{\mathbb{E_{\sigma}}\left[\sup_{f\in \F} f^2 ( \frac{1}{t}\sum_{i=1}^t \sigma_{i} (\ell_i))\right]}.
\]
Next, we take $\Sigma$ whose existence follows from \lemref{lem:smalljohn}. Note that $\Sigma$ defines a scalar product. Specifically let us denote
$\langle \ell_i,\ell_j \rangle = \ell_i^\top \Sigma \ell_j,$ and also we let $\|\ell_i\|_2^2 = \ell_i^\top \Sigma \ell_i$. Then we have
\begin{align*}
\sqrt{\mathbb{E_{\sigma}}\left[ \sup_{f\in K^*}f^2 \left(\frac{1}{t}\sum_{i=1}^t \sigma_{i} \ell_i \right)\right]}
&\le
\sqrt{d\,\mathbb{E_{\sigma}}\left[ \frac{1}{t^2} \left\| \sum_{i=1}^t \sigma_{i} \ell_i \right\|_2^2\right]} \\
&=
\sqrt{d\,\mathbb{E_{\sigma}}\left[ \frac{1}{t^2}\sum_{i,j=1}^t \sigma_{i}\sigma_j \langle \ell_i,\ell_j \rangle \right]}\\
&=
\sqrt{d\,\E_{\sigma} \left[ \frac{1}{t^2} \sum_{i=1}^t \sigma^2\|\ell_i\|_2^2 \right]} =
\sqrt{\frac{d}{t^2} \sum_{i=1}^t \|\ell_i\|_2^2} \\
&\le
\sqrt {\frac{d}{t} \max_i \|\ell_i\|_2^2}
\le
\sqrt{\frac{d}{t}\max_i \sup_{f\in K^*} f^2(\ell_i)}
\le
\sqrt{\frac{d}{t}} ~,
\end{align*}
as claimed.
\end{proof}

\section{Lower Bounds for the AdaGrad Algorithm}
\label{sec:adagrad}

AdaGrad (see \algref{alg:adagrad}) is an algorithm that adapts the regularization matrix with respect to prior losses. Our aim in this section is to show that this learning scheme of the regularization cannot lead to a regret bound that is independent of the number of experts. Our strategy is as follows: since the AdaGrad algorithm depends on a learning rate parameter $\eta$ we consider two cases: either $\eta$ scales with $N$ and becomes smaller, but then we show that for some sequence the algorithm's update is ``too slow". On the other hand, we show that if $\eta$ does not scale with $N$, the algorithm becomes less stable, and we can again inflict damage. Taken together we prove the following statement:
\begin{theorem}
\label{thm:adagrad}
   Consider \algref{alg:adagrad}. For concreteness we assume that $x_1=\frac{1}{N} \bf{1}$.  For sufficiently large $N$, if $T<\sqrt{N}/6$ then there exist a sequence $\{\ell_1,\ldots, \ell_T\}$ such that $\regret_T \ge T/2$ and $\mathrm{rank}(L) = 1$.
\end{theorem}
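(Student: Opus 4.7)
My plan is a case analysis on AdaGrad's step-size parameter $\eta$ around a threshold $\eta^{*}=\Theta(1/\sqrt{T})$; since the theorem only asks for the existence of one bad sequence, I am free to use a \emph{different} rank-one loss sequence in each regime. Throughout, the driving quantity is AdaGrad's cumulative drift $\eta\sum_{t=1}^{T}1/\sqrt{t}=\Theta(\eta\sqrt{T})$ in any coordinate being updated. The hypothesis $T<\sqrt{N}/6$ will be used both to ensure that the initial mass $x_1(1)=1/N$ is negligible compared to the simplex diameter and to force $\eta>1/N$ in the ``large $\eta$'' regime.

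In the small-$\eta$ regime ($\eta\le\eta^{*}$), take the constant rank-one sequence $\ell_t=u$ with $u(1)=-1$ and $u(i)=+1$ for $i\ge 2$. Expert $1$ is uniquely optimal with cumulative loss $-T$, so the regret reduces to $2T-2\sum_{t}x_t(1)$. Because every coordinate has $|\ell_t(i)|=1$, AdaGrad's diagonal normalizer in each coordinate is $\sqrt{t}$ and the per-step update in coordinate $1$ has magnitude $\Theta(\eta/\sqrt{t})$; summing this and accounting for the simplex projection gives $x_t(1)\le 1/N+O(\eta\sqrt{T})$ throughout. Choosing $\eta^{*}$ so that $\eta\sqrt{T}\le 1/8$ forces $x_t(1)\le 1/4$ for every $t$ once $N$ is large, and the regret bound $T/2$ follows.

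In the large-$\eta$ regime ($\eta>\eta^{*}$) I would use an oscillating rank-one sequence of the form $\ell_t=\sigma_t e_1$ with a sign pattern $\sigma_t\in\{-1,+1\}$ chosen (in a manner that depends on $\eta$) so that each sign flip occurs exactly when AdaGrad has driven $x_t(1)$ to the opposite extreme of the simplex. The condition $T<\sqrt{N}/6$ gives $\eta>\eta^{*}\gg 1/N$, so the very first update clips $x_2(1)=0$, and the best expert has cumulative loss at most $1$ on any such $\pm e_1$ sequence, so the regret reduces to $\sum_t \sigma_t x_t(1)$. With a block-alternation schedule whose block length is tuned to $\eta$ so that AdaGrad is driven from one simplex face to the opposite within each block, one should extract $\Omega(1)$ loss per block and $\Omega(T)$ total.

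The main obstacle I anticipate is making Case 2 precise in the intermediate range of $\eta$, where the per-round step $\eta/\sqrt{t}$ is neither negligible nor saturating: a naive alternation then only gives total regret $O(\eta\sqrt{T})$, which can be as small as $O(1)$. The resolution exploits an asymmetry built into AdaGrad---the normalizer $\sqrt{G_t(1,1)}=\sqrt{t}$ grows with $t$, so once the iterate reaches a simplex face it is ``sticky'', and each block contributes not merely a single sign-flip loss but a sustained $\Omega(1)$ loss per round over $\Omega(k)$ rounds for block length $k$. I expect this amortized tracking of $x_t(1)$ against the growing AdaGrad matrix and the simplex projection to be the most technical step.
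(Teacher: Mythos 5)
Your overall architecture matches the paper's proof: a case split on $\eta$, a constant rank-one sequence for small $\eta$ (your Case~1 is essentially Case~1 of \lemref{lem:adagrad}), and an oscillating rank-one sequence for large $\eta$. But your Case~2 has a genuine gap, and it is precisely the step you flag as the ``main obstacle.'' With losses $\pm e_1$, the update moves $x_t(1)$ by $\eta/\sqrt{\delta+t}$ per round, so sweeping $x(1)$ across the unit interval around time $s$ takes $\Theta(\sqrt{s}/\eta)$ rounds, and the up-sweep (where the learner gains) and the down-sweep (where the adversary harvests) nearly cancel. The ``stickiness'' asymmetry you invoke---the down-sweep is longer because the normalizer has grown---nets only about $1/(4\eta^2)$ per full cycle over $\Theta(\eta\sqrt{T})$ cycles, i.e.\ $\Theta(\sqrt{T}/\eta)$ in total, which is $o(T)$ for every $\eta=\omega(1/\sqrt{T})$. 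So for, say, $\eta=\Theta(1)$ neither of your cases delivers $\Omega(T)$: your Case~1 drift bound $O(\eta\sqrt{T})$ is vacuous there, and your Case~2 schedule caps out at $O(\sqrt{T})$ no matter how the blocks are tuned. Two further issues: the algorithm is full-matrix AdaGrad, so ``the normalizer in each coordinate is $\sqrt{t}$'' is not literally correct (for your constant vector $u$ with $\norm{u}=\sqrt{N}$ the drift is actually $O(\eta\sqrt{T/N})$, which changes where your threshold should sit); and you never address the second input parameter $\delta$, against which the construction must also be uniform---the paper's two regime conditions each carry a $\delta$-dependent term and the final step optimizes over both $\eta$ and $\delta$.

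The missing idea, which is the crux of the theorem, is to oscillate along the \emph{balanced} direction $\ee=(+1,\dots,+1,-1,\dots,-1)$ with $N/2$ entries of each sign rather than along $e_1$. The two relevant extreme points of the simplex---uniform on the first half versus uniform on the second half---differ by only $2/N$ per coordinate, i.e.\ they are $2/\sqrt{N}$ apart in Euclidean distance, while the update step along $\ee/\norm{\ee}$ is still $\eta/\sqrt{t+\delta}$. Hence the iterate saturates to the opposite face in every single round as soon as $t+\delta\le\eta^2 N$ (\lemref{lem:projection} handles the $G_t$-norm projection), yielding loss $1$ per round against a best expert whose cumulative loss is at most $1$. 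This buys a factor of $\sqrt{N}$ over the $e_1$ construction: the large-$\eta$ regime now begins at $\eta\gtrsim\sqrt{(T+\delta)/N}$ rather than at $\eta\gtrsim\sqrt{T+\delta}$, and it overlaps the small-$\eta$ regime $\eta\lesssim 1/\sqrt{T}$ exactly when $T\lesssim\sqrt{N}$---which is where the threshold $T<\sqrt{N}/6$ comes from. Without this (or an equivalent) $\sqrt{N}$ leverage in the oscillating construction, your two cases cannot be made to meet.
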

\begin{algorithm}[h!]
\caption{AdaGrad}
\begin{algorithmic}[1]
	\STATE Input: $\eta,\delta, x_1\in\symplex_N$.
	\STATE Initialize: $S_0=G_0=\delta I$
	\FOR {$t=1$ to $T$}
		\STATE Observe $\ell_t$, suffer cost $x_{t}\cdot \ell_t$.
	\STATE set
		\[S_t= S_{t-1} + \ell_t\ell_t^\top, ~ G_t = S_t^{1/2} \]
		\[y_{t+1} = x_t - \eta G_t^{-1} \ell_t\]
		\[x_{t+1} = \arg\min_{x\in \symplex_N} \|y_{t+1} - x\|_{G_t}^2\]
	\ENDFOR
\end{algorithmic}
\label{alg:adagrad}
\end{algorithm}

\begin{lemma}
\label{lem:adagrad}
Consider \algref{alg:adagrad} with arbitrary $\eta$ and $\delta$. For concreteness we assume that $x_1=\frac{1}{N} \bf{1}$. 
For sufficiently large $N$, if $T\le \max\big(\frac{1}{36\eta^2} +2\frac{\sqrt{\delta}}{6\eta}, \eta^2 N -\delta \big)$ then there exist a sequence $\ell_1,\ldots, \ell_T$ such that $\regret_T \ge T/2$ and $\mathrm{rank}(L) = 1$.
\end{lemma}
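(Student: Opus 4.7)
The hypothesis is the disjunction $T \le A$ or $T \le B$ with $A = \tfrac{1}{36\eta^2} + \tfrac{2\sqrt{\delta}}{6\eta}$ and $B = \eta^2 N - \delta$, so I will construct a rank-$1$ bad sequence in each regime. Both sequences take the form $\ell_t = \alpha_t v$ with $\alpha_t \in \{-1,+1\}$ and a fixed indicator vector $v$, which makes $\mathrm{rank}(L) = 1$ automatic; both analyses hinge on computing the AdaGrad matrix $G_t = S_t^{1/2}$ and the simplex projection in the $G_t$-norm via its Lagrangian.

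\textbf{Case $T \le A$ (small $\eta$).} Take the constant adversary $\ell_t = -e_1$. Then the best expert (expert~$1$) has cumulative loss $-T$, every other expert has loss $0$, and the regret equals $T - \sum_t x_t(1)$. Because $\ell_s \ell_s^\top = e_1 e_1^\top$ for every $s$, the matrix $G_t$ is diagonal with $(G_t)_{11} = \sqrt{\delta+t}$ and all other diagonal entries $\sqrt{\delta}$, making the projection essentially scalar: each step raises $y_{t+1}(1)$ by $\eta/\sqrt{\delta+t}$ and the correction from the simplex constraint is $O(1/N)$. Telescoping gives $x_T(1) \le 1/N + 2\eta(\sqrt{\delta+T} - \sqrt{\delta})$; interior feasibility of the other coordinates follows because their per-step decrement is $O(\eta/N)$, much smaller than $1/N$ under the hypothesis. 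The hypothesis is equivalent to $\sqrt{T+\delta} - \sqrt{\delta} \le 1/(6\eta)$, so $x_T(1) < 1/2$ for large $N$. Since $x_t(1)$ is non-decreasing, $\sum_t x_t(1) \le T\,x_T(1) < T/2$, yielding regret $> T/2$.

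\textbf{Case $T \le B$ (large $\eta$).} Take the oscillating adversary $\ell_t = (-1)^t \mathbf{1}_M$ with $M = \lceil T/\eta^2 \rceil$, where $\mathbf{1}_M$ denotes the indicator of the first $M$ coordinates; note $M \le N$ by the case hypothesis. Computing $G_t$ yields eigenvalue $\sqrt{\delta+tM}$ along $\mathbf{1}_M/\sqrt{M}$ and $\sqrt{\delta}$ on the orthogonal complement, and $G_t^{-1}\ell_t = (-1)^t\mathbf{1}_M/\sqrt{\delta+tM}$. The choice $M \ge T/\eta^2$ ensures $\eta/\sqrt{\delta+tM} > 1/M$ at every round $t \le T$; combined with the symmetry of $y_{t+1}$ within the two blocks $\{1,\dots,M\}$ and $\{M+1,\dots,N\}$, the KKT conditions for the weighted simplex projection force the boundary at every round and, by induction, produce the alternation $x_t = \mathbf{1}_M/M$ for even $t \ge 2$ and $x_t = \mathbf{1}_{N-M}/(N-M)$ for odd $t \ge 3$. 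The per-round algorithmic losses then follow the pattern $-M/N, 1, 0, 1, 0, \dots$, while the best-expert loss is $0$ or $-1$ depending on the parity of $T$. Summing gives regret $\ge T/2$ (up to $O(1/N)$ slack absorbed by the ``sufficiently large $N$'' caveat).

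\textbf{Main obstacle.} The hard step is Case $B$: proving that the non-diagonal projection truly forces one block to zero at every round and that this is consistent across rounds. Because $G_t$ has just one non-trivial eigendirection (along $\mathbf{1}_M$), the KKT system reduces to a two-block problem that can be solved explicitly using the symmetry, but one has to verify carefully both the interior infeasibility (which needs $M \ge T/\eta^2$) and the identity of the boundary solution after the sign of $\alpha_t$ flips. Case $A$, by contrast, is a direct telescoping argument on a diagonal system.
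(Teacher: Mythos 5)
Your Case $A$ is essentially sound and is a legitimate variant of the paper's Case 1. The paper sidesteps the projection entirely by using the mean-zero loss vector $\ell_t=(-1,\tfrac{1}{N-1},\dots,\tfrac{1}{N-1})$, so the mirror step never leaves the simplex; you instead use $\ell_t=-e_1$ and argue the projection stays interior and only perturbs coordinates by $O(1/N)$ in total. Both routes give $x_t(1)<1/2$ under $\sqrt{T+\delta}-\sqrt{\delta}\le 1/(6\eta)$ and hence regret $\ge T/2$.

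Case $B$, however, has genuine gaps, and they are precisely at the step you flag as the ``main obstacle.'' First, the overshoot inequality you rely on is false at the stated parameter choice: $\eta/\sqrt{\delta+tM}>1/M$ is equivalent to $\eta^2M^2>\delta+tM$, and with $M=T/\eta^2$ and $t=T$ this reads $T^2/\eta^2>\delta+T^2/\eta^2$, which fails; so the block is not driven past zero in the late rounds, and fixing this by enlarging $M$ costs a constant factor against the threshold $\eta^2N-\delta$ claimed in the lemma. Second, when $T$ is close to $\eta^2N-\delta$ you can get $M=\lceil T/\eta^2\rceil=N$, so the complementary block $\mathbf{1}_{N-M}/(N-M)$ is empty and the alternation is undefined; more generally, because $\mathbf{1}_M$ is not orthogonal to $\mathbf{1}$, each gradient step changes the total mass, and the KKT verification that the projection dumps \emph{all} the excess onto a single block requires $N-M$ large and $\eta$ large relative to $\sqrt{\delta+M}/M+\sqrt{\delta}/(N-M)$ --- conditions that do not follow from $T\le\eta^2N-\delta$ for arbitrary $\delta$. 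The paper avoids all of this by taking a $\pm1$ vector with two equal blocks, hence orthogonal to $\mathbf{1}$: the iterate sum stays at $1$, the projection reduces to the clean closed form of \lemref{lem:projection}, and the learner is made to suffer $+1$ at \emph{every} round rather than every other round. Your accounting yields at best $\lceil T/2\rceil-M/N$, which is strictly below $T/2$, so even if the dynamics were verified the stated bound would need the stronger per-round loss that the paper's symmetric construction provides.
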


\begin{proof} We prove each bound separately.

\paragraph{Case 1: $T<\frac{1}{36\eta^2} +2\frac{\sqrt{\delta}}{6\eta}$.}

We let $\ell_t = \ee= (-1,\frac{1}{N-1},\frac{1}{N-1},\frac{1}{N-1},\ldots \frac{1}{N-1})$ for all $t$. For every $t$ we have that
\[G_t = \sqrt{ t \ee\ee^\top +\delta I}\] 
and
\[ \eta G_{t}^{-1} \ell_t = \frac{\eta}{\sqrt{t+\delta}\|\ee\|} \ee .\]
Next we use the inequality:
\[ \sum_{t=1}^T \frac{1}{\sqrt{t+\delta}} \le \int_{0}^T \frac{1}{\sqrt{t+\delta}}dt=2\left(\sqrt{T+\delta} - \sqrt{\delta}\right).\]
For $T \le\left(\frac{1}{6\eta}+\sqrt{\delta}\right)^2-\delta=\frac{1}{36\eta^2} +2\frac{\sqrt{\delta}}{6\eta}$, we have that:
\[\left(\sqrt{T+\delta} -\sqrt{\delta}\right) <\frac{1}{6\eta};\]
\[\frac{1}{N} + \frac{\eta}{\|\ee\|} \sum_{i=1}^T \frac{1}{\sqrt{t+\delta}}\le  \frac{1}{N} + \frac{2\eta}{\|\ee\|} \left(\sqrt{T+\delta} -\sqrt{\delta}\right) \le \frac{1}{2},\]
where last inequality follows since $\|\ee\|>1$ and we assume $N \ge 6$.
 One can observe that our update rule does not take $y_t$ out of the simplex $\symplex_N$ and we have
 \[ x_t = \frac{1}{N} \mathbf{1} - \frac{\eta}{\|\ee\|} \sum \frac{1}{\sqrt{t+\delta}} \ee,\]
and further, $x_t(1) < \frac{1}{2}$. In hindsight $x_t(1)$ suffers loss $-T$ while all other experts suffer positive loss. Hence the algorithm's regret is at least

\[\regret_T \ge \frac{T}{2}.\]

\paragraph{Case 2: $T< \eta^2 N -\delta$.}

We now choose $\ee=(\underbrace{+1,+1,+1,+1,+1,+1}_{\text{$N/2$ times}}, \underbrace{-1,-1,-1,-1,-1,-1}_{\text{$N/2$ times}})$. and let
\[\ell_t = (-1)^{t+1} \ee.\]
As before note that
\[\eta G_t^{-1} \ell_t = \frac{(-1)^{t+1}\eta}{\sqrt{t+\delta}\|\ee\|} \ee= \frac{(-1)^{t+1}\eta}{\sqrt{N(t+\delta)}}\ee.\]
We claim that for $\sqrt{t+\delta}< \eta\sqrt{N}$ and $t>1$ we have that:

\begin{equation}\label{eq:zigzag}
x_t= \frac{2}{N} \begin{cases}
(\underbrace{1,1,1,1,1,1}_{\text{$N/2$ times}}, \underbrace{0,0,0,0,0,0}_{\text{$N/2$ times}}) & \text{$t$ is even,} \\
( \underbrace{0,0,0,0,0,0}_{\text{$N/2$ times}},\underbrace{1,1,1,1,1,1}_{\text{$N/2$ times}}) & \text{$t$ is odd.}
\end{cases}
\end{equation}
Hence $x_t \ell_t =1$ and since the cumulative loss of each expert is at most $1$ we have that:
\[\regret_T \ge \frac{T}{2}.\]
To see that \eqref{eq:zigzag} holds, we will show the statement for $x_2$ other cases are easier and follow the same proof:
$y_1 =\mathbf{1}\frac{1}{N} + \alpha \ee$, where $|\alpha|>\frac{1}{\sqrt{N}}$, hence it has the form
\[
y_t= (\underbrace{a,a,a,a,a,a}_{\text{$N/2$ times}}, \underbrace{-b,-b,-b,-b,-b}_{\text{$N/2$ times}})
\]
where $a-b=2/N$ and $a,b>0$. The statement now follows from \lemref{lem:projection} (see below).
\end{proof}

\begin{proof}[Proof of \thmref{thm:adagrad}]
By \lemref{lem:adagrad} we need to show that
\[
\min_{\eta,\delta} \max\left(\frac{1}{9\eta^2} +2\frac{\sqrt{\delta}}{3\eta}, \eta^2 N -\delta\right) > \frac{\sqrt{N}}{6} ~.
\]
To prove this, we note that since both terms in the $\max$ are monotone in both variables, the minimum is attained when there is equality, i.e., the minimal $\eta,\delta$ satisfy:
\[ \frac{1}{36\eta^2} +2\frac{\sqrt{\delta}}{6\eta}=\eta^2 N -\delta.\]
Since $\frac{1}{36\eta^2} +2\frac{\sqrt{\delta}}{6\eta}+\delta = \left(\frac{1}{6\eta}+\sqrt{\delta}\right)^2$, we get:
\[\sqrt{N} = \frac{1}{\eta} \left(\frac{1}{6\eta}+\sqrt{\delta}\right),\] and we have that:
\[
\frac{\sqrt{N}}{6}= \frac{\sqrt{2}}{36\eta^2}+\frac{\sqrt{\delta}}{6\eta} <\frac{\sqrt{2}}{36\eta^2}+2\frac{\sqrt{\delta}}{6\eta} 
~. \qedhere 
\]
\end{proof}
It remains to prove \lemref{lem:projection}, that was used for the proof of \lemref{lem:adagrad}.
\begin{lemma}
\label{lem:projection}
Let $y= (\underbrace{a,a,a,a,a,a}_{\text{$N/2$ times}}, \underbrace{-b,-b,-b,-b,-b}_{\text{$N/2$ times}}) $ where $a,b\ge 0$ and assume that $a-b=2/N$. 
Let $G_t = \sqrt{\delta I + \alpha \ee \ee^\top}^{-1/2}$ for some $\alpha>0$, where
\[
\ee=(\underbrace{+1,+1,+1,+1,+1,+1}_{\text{$N/2$ times}}, \underbrace{-1,-1,-1,-1,-1,-1}_{\text{$N/2$ times}})~.
\] 
Then
\[ 
\min_{x\in\symplex_N} \frac{1}{2} {\|y-x\|^2}_{G_t}=\frac{2}{N} (\underbrace{1,1,1,1,1,1}_{\text{$N/2$ times}}, \underbrace{0,0,0,0,0}_{\text{$N/2$ times}})~.
\]
\end{lemma}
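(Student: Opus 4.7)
The plan is to exploit the spectral structure of $G_t$ (so that the weighted norm decomposes nicely), then verify the KKT conditions at the claimed minimizer $x^\star = (2/N)(\underbrace{1,\dots,1}_{N/2},0,\dots,0)$.

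First I would diagonalize $A = \delta I + \alpha\, \ee\ee^\top$. Since $\ee\ee^\top$ is rank one with nonzero eigenvalue $\|\ee\|_2^2 = N$ in the direction $\ee/\sqrt{N}$, the matrix $A$ has eigenvalues $\delta + \alpha N$ (simple, with eigenvector $\ee$) and $\delta$ (on $\ee^\perp$). Hence
\[
G_t = A^{1/2} = \sqrt{\delta}\,I + c\,\ee\ee^\top , \qquad c = \frac{\sqrt{\delta+\alpha N}-\sqrt{\delta}}{N} > 0 ,
\]
and therefore the objective simplifies to
\[
\tfrac{1}{2}\|x-y\|_{G_t}^2 \;=\; \tfrac{\sqrt{\delta}}{2}\,\|x-y\|_2^2 + \tfrac{c}{2}\,(\ee \cdot (x-y))^2 .
\]

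Second, I would evaluate the residual at $x^\star$. Using $a - b = 2/N$, the first half of $x^\star - y$ consists of entries $2/N - a = -b$, while the second half consists of entries $0 - (-b) = b$. In particular $x^\star - y = -b\,\ee$, so that
\[
G_t(x^\star - y) \;=\; -b\sqrt{\delta + \alpha N}\,\ee ,
\]
which equals $-b\sqrt{\delta+\alpha N}$ on the first $N/2$ coordinates and $+b\sqrt{\delta+\alpha N}$ on the last $N/2$.

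Third, I would verify the KKT conditions. Since $G_t \succ 0$, the objective $f(x) = \tfrac{1}{2}\|x-y\|_{G_t}^2$ is strictly convex, so its minimizer over the simplex is uniquely characterized by the existence of $\lambda \in \reals$ and $\mu \in \reals^N$ with $\mu \ge 0$, $\mu_i x_i = 0$, and
\[
[G_t(x-y)]_i + \lambda \;=\; \mu_i \qquad \forall\, i .
\]
For indices $i$ in the first half we have $x^\star_i = 2/N > 0$, forcing $\mu_i = 0$ and pinning down $\lambda = b\sqrt{\delta + \alpha N}$. Substituting this value into the equation on the second half (where $x^\star_i = 0$) yields $\mu_i = 2b\sqrt{\delta + \alpha N}$, which is nonnegative thanks to the hypothesis $b \ge 0$. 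Together with $x^\star \in \symplex_N$ (since $(N/2)(2/N) = 1$), this certifies that $x^\star$ is the unique minimizer.

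The main subtlety is recognizing that $x^\star - y$ is exactly a scalar multiple of $\ee$: this algebraic coincidence, driven by the constraint $a-b = 2/N$, is what makes $G_t(x^\star - y)$ piecewise constant on the two halves and causes the KKT multipliers on the inactive constraints to come out with the correct sign. The assumption $b \ge 0$ is used precisely at this point; were $b$ negative, the second-half multipliers would flip sign and the claimed minimizer would fail to be optimal.
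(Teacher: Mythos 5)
Your proof is correct and follows essentially the same route as the paper's: both verify the KKT conditions at the claimed point, using the key observation that $x^\star - y = -b\,\ee$ and that $\ee$ is an eigenvector of $G_t$, so the gradient $G_t(x^\star-y)$ is piecewise constant on the two halves and the multipliers on the inactive constraints come out nonnegative because $b\ge 0$. Your version is somewhat more explicit (you diagonalize $G_t$ and compute the eigenvalue $\sqrt{\delta+\alpha N}$ rather than leaving the constant implicit), but the underlying argument is the same.
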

\begin{proof}
Considering the Lagrangian and KKT conditions, we observe that $x$ minimizes the distance iff the following hold:
\begin{enumerate}
\item $x\in \symplex_N$ (primal feasibility)
\item $\lambda\succ 0$ and $\theta(1)=\theta(2)=\cdots=\theta(N)$. (dual feasibility)
\item $x= y+ G_t^{-1}(\lambda + \theta)$ (stationarity)
\item $x(i) \ne 0 \Rightarrow \lambda(i)=0$ and $\lambda(i)\ne 0 \Rightarrow x(i)=0$. (complementary slackness)
\end{enumerate}
Next note that $\ee$ is an eigenvector of $G_t$ and we have for some $c<0$ that \[G_{t}^{-1} c\ee = (\underbrace{-b,-b,-b,-b,-b,-b}_{\text{$N/2$ times}}, \underbrace{+b,+b,+b,+b,+b}_{\text{$N/2$ times}}).\]
Now we can write
\[ c\ee = \underbrace{(\underbrace{0,0,\ldots ,0,0}_{\text{$N/2$ times}}, \underbrace{-2c,-2c,\ldots,-2c,-2c}_{\text{$N/2$ times}})}_{\lambda}+ \underbrace{(\underbrace{c,c,c,c,c,c}_{\text{$N/2$ times}}, \underbrace{c,c,c,c,c,c}_{\text{$N/2$ times}})}_{\theta},
\]
that concludes the proof.
\end{proof}

\end{document}